\documentclass{article} 
\usepackage{configurations/iclr2026_conference,times}


\newcommand{\Pb}[1]{{\mathbb{P}}\left(#1\right) }


\providecommand{\cc}{\mathbf{c}}

\renewcommand{\ss}{\mathbf{s}}

\providecommand{\uu}{\mathbf{u}}
\providecommand{\vv}{\mathbf{v}}
\providecommand{\ww}{\mathbf{w}}
\providecommand{\xx}{\mathbf{x}}

\providecommand{\zz}{\mathbf{z}}


\providecommand{\cC}{\mathcal{C}}

\providecommand{\cH}{\mathcal{H}}
\providecommand{\cI}{\mathcal{I}}

\providecommand{\cS}{\mathcal{S}}

\providecommand{\cV}{\mathcal{V}}

\providecommand{\cZ}{\mathcal{Z}}

\providecommand{\R}{\mathbb{R}} 
\providecommand{\N}{\mathbb{N}} 


\providecommand{\mC}{\mathbf{C}}
\providecommand{\mD}{\mathbf{D}}

\providecommand{\mJ}{\mathbf{J}}

\providecommand{\mS}{\mathbf{S}}

\providecommand{\mU}{\mathbf{U}}
\providecommand{\mV}{\mathbf{V}}

\providecommand{\mX}{\mathbf{X}}

\providecommand{\mZ}{\mathbf{Z}}

\newcommand{\abs}[1]{\left\lvert#1\right\rvert}

\usepackage{hyperref}
\usepackage{url}

\usepackage{graphicx}
\usepackage{subcaption}
\usepackage[compact]{titlesec}
\renewcommand{\paragraph}[1]{\noindent\textbf{#1}}
\usepackage[page]{appendix}

\usepackage{amsmath,amssymb,amsthm}
\usepackage{thmtools,thm-restate}
\usepackage{mathtools}

\usepackage[capitalize,noabbrev]{cleveref}
\theoremstyle{plain}
\newtheorem{theorem}{Theorem}[section]

\newtheorem{lemma}[theorem]{Lemma}

\theoremstyle{definition}
\newtheorem{definition}[theorem]{Definition}
\theoremstyle{remark}

\usepackage{bm}

\usepackage{enumitem}
\usepackage{soul}
\usepackage{multirow}
\usepackage{algorithm}
\usepackage{xcolor}
\usepackage{tikz}
\usetikzlibrary{bayesnet}
\usetikzlibrary{arrows}
\usepackage{caption}
\usetikzlibrary{backgrounds}

\usepackage{wrapfig}
\usepackage{booktabs}
\usepackage{arydshln}
\usepackage{amsmath,xcolor}


\providecommand{\latel}{z}

\providecommand{\obsl}{\xx}

\providecommand{\latl}{\zz}

\providecommand{\obse}{X}
\providecommand{\disc}{D}
\providecommand{\late}{Z}
\providecommand{\sele}{S}

\providecommand{\obs}{\mX}
\providecommand{\dis}{\mD}

\providecommand{\lat}{\mZ}
\providecommand{\sel}{\mS}

\providecommand{\parents}[1]{\mathrm{Pa}(#1)}
\providecommand{\children}[1]{\mathrm{Ch}(#1)}

\providecommand{\pureparents}[1]{\mathrm{PPa}(#1)}
\providecommand{\hybridparents}[1]{\mathrm{HPa}(#1)}

\providecommand{\Lv}{L_{\mathrm{V}}}

\providecommand{\invsub}{\mV}
\providecommand{\chasub}{\mC}
\providecommand{\auxvar}{\mU}

\providecommand{\invsubl}{\vv}
\providecommand{\chasubl}{\cc}
\providecommand{\auxvarl}{\uu}

\providecommand{\invcomp}{V}

\providecommand{\auxcomp}{U}
\providecommand{\invcompl}{v}
\providecommand{\chacompl}{c}

\newcommand{\card}[1]{\left| #1 \right|}
\newcommand{\union}{\bigcup}
\newcommand{\intersection}{\bigcap}


\title{ Beyond the Black Box: Identifiable Interpretation and Control in Generative Models via Causal Minimality }


\author{
Lingjing Kong$^{*1}$, Shaoan Xie$^{*1,2}$, Guangyi Chen$^{1,2}$, Yuewen Sun$^{1,2}$, Xiangchen Song$^{1}$, \\
\textbf{~Eric P. Xing}$^{1,2}$, \textbf{Kun Zhang}$^{1,2}$  \\
$^{*}$Equal contribution \\
$^{1}$Carnegie Mellon University, Pittsburgh, PA, USA \\
$^{2}$Mohamed bin Zayed University of Artificial Intelligence, Abu Dhabi, UAE \\
}

%

\let\originalss\ss
\DeclareRobustCommand{\ss}{\ifmmode\mathbf{s}\else\originalss\fi}

\iclrfinalcopy 
\begin{document}

\maketitle

\begin{abstract}
    Deep generative models, while revolutionizing fields like image and text generation, largely operate as opaque ``black boxes'', hindering human understanding, control, and alignment. While methods like sparse autoencoders (SAEs) show remarkable empirical success, they often lack theoretical guarantees, risking subjective insights. 
    Our primary objective is to establish a principled foundation for interpretable generative models. 
    We demonstrate that the principle of causal minimality -- favoring the simplest causal explanation -- can endow the latent representations of modern generative models with clear causal interpretation and robust, component-wise identifiable control. We introduce a novel theoretical framework for hierarchical selection models, where higher-level concepts emerge from the constrained composition of lower-level variables, better capturing the complex dependencies in data generation. Under theoretically derived minimality conditions, we show that learned representations can be equivalent to the true latent variables of the data-generating process. Empirically, applying these constraints to leading text-to-image diffusion models allows us to extract their innate hierarchical concept graphs, offering fresh insights into their internal knowledge organization. Furthermore, these causally grounded concepts serve as levers for fine-grained model steering, paving the way for transparent, reliable systems.
\end{abstract}

\section{Introduction} \label{sec:introduction}

The transformative power of deep generative models, including diffusion models~\citep{sohldickstein2015deep,ho2020denoising,rombach2021highresolution,song2022denoising,dhariwal2021diffusion,nichol2021improved} and language models~\citep{radford2018improving,radford2019language,brown2020language,raffel2020exploring}, is reshaping numerous domains. However, their escalating complexity and scale frequently cast them as opaque ``black boxes''~\citep{shwartz2017opening,olah2020zoom}. This opacity presents a formidable barrier to genuine human understanding, severely curtails our ability to exert precise control over their behavior~\citep{jahanian2020steerability, harkonen2020ganspace, shen2020interpreting, wu2020stylespace}, and complicates the crucial alignment with human values and intentions.

Although recent empirical tools, such as sparse autoencoders (SAEs) for large language models (LLMs)~\citep{cunningham2023sparse, huben2023sparse, gao2024scaling} and diffusion models~\citep{surkov2024unpacking, kim2024textit, kim2025concept, cywinski2025saeuron, huang2025tide}, offer avenues for probing these models, a fundamental gap persists. Without rigorous theoretical underpinnings, interpretations derived from these methods risk being subjective or susceptible to human biases, rendering them potentially untrustworthy for risk-sensitive applications~\citep{kaddour2022causal, moran2025towards,scholkopf2021toward}. 
In this work, we directly tackle this critical challenge, seeking to establish a principled foundation for interpretable and controllable generative models. 

Our investigation centers on two questions: Under what \emph{theoretical conditions} can we reliably identify meaningful, interpretable latent concepts within the intricate architectures of modern generative models? And, crucially, what \emph{actionable, theoretically-grounded insights} can empower us to advance both the interpretability and the controllability of these powerful systems?

Towards these goals, we identify the \emph{causal minimality}~\citep{peters2017elements, spirtes2000causation, hitchcock1997probabilistic} principle as the formal underpinning that connects widespread practices, such as enforcing sparsity, to the recovery of meaningful, interpretable concepts. 
This principle, advocating for the simplest causal model consistent with observations, allows for the identification of latent hierarchical concept structures. 
In our context, minimality translates to sparsity in the concept graphs. 
We explore its application to text-to-image (T2I) diffusion models~\citep{ramesh2021zero,ramesh2022hierarchical, rombach2021highresolution}. Our findings indicate that imposing sparsity constraints on internal representations is instrumental for identifying intrinsic visual concepts.

A cornerstone of our contribution is establishing the first identifiability results for \emph{selection}-based~\citep{zheng2024detecting,spirtes1995causal,hernan2004structural,zhang2008completeness,bareinboim2022recovering,forre2020causal,correa2019identification,chen2024modeling} hierarchical models. In such models, higher-level variables emerge as effects of compositions of lower-level variables, where higher-level variables control and select the configuration of lower-level ones. This fundamentally diverges from traditional hierarchical causal models~\citep{pearl2009causality,choi2011learning,zhang2004hierarchical}, in which causal influence typically propagates from higher to lower levels. 
The selection model structure is particularly adept at capturing the intricate conditional dependencies among low-level features for forming coherent high-level concepts -- it explains how specific arrangements of wheels, doors, and a roof constitute a recognizable ``car'', rather than a disjointed collection of parts. Traditional hierarchical models often neglect such intra-level dependencies by assuming no within-layer causal edges, as explicitly modeling them would yield overly dense graphs. 
The selection mechanism, in contrast, offers a simpler approach to this essential coordination. 
Its adherence to the minimality principle strongly favors it as a more accurate representation of the true model.

Despite the appeal, their identifiability has been underexplored. Prior research has largely centered on traditional hierarchical structures. Moreover, their techniques often rest on simplifying assumptions (e.g., linearity~\citep{xie2022identification, huang2022latent, dong2023versatile, anandkumar2013learning} or achieve only subspace-level identifiability~\citep{kong2023identification}). Such methods are generally inapplicable to the hierarchical selection models. Our framework is the first to establish \emph{component-wise} identifiability for \emph{continuous hierarchical selection} models. Specifically, we demonstrate that under the minimality conditions in Conditions~\ref{cond:vision_identification}-\ref{asmp:sparsity}, the learned representations are equivalent to the true latent variables of the underlying hierarchical process. This disentanglement of individual, atomic concepts is what affords significantly more nuanced interpretability and precise control in the resulting generative models.

By applying the derived sparsity constraints to state-of-the-art generative models, we successfully extract their innate hierarchical concept graphs (Figure~\ref{fig:topic_figure}). This not only illuminates their internal knowledge organization but also shows that causally-grounded concepts serve as highly effective levers for model steering. Our experiments illustrate key implications of our theorems and show how a principled, causal understanding can guide the application of established interpretation techniques.


\begin{figure}[!t]
\centering
\includegraphics[width=0.9\textwidth]{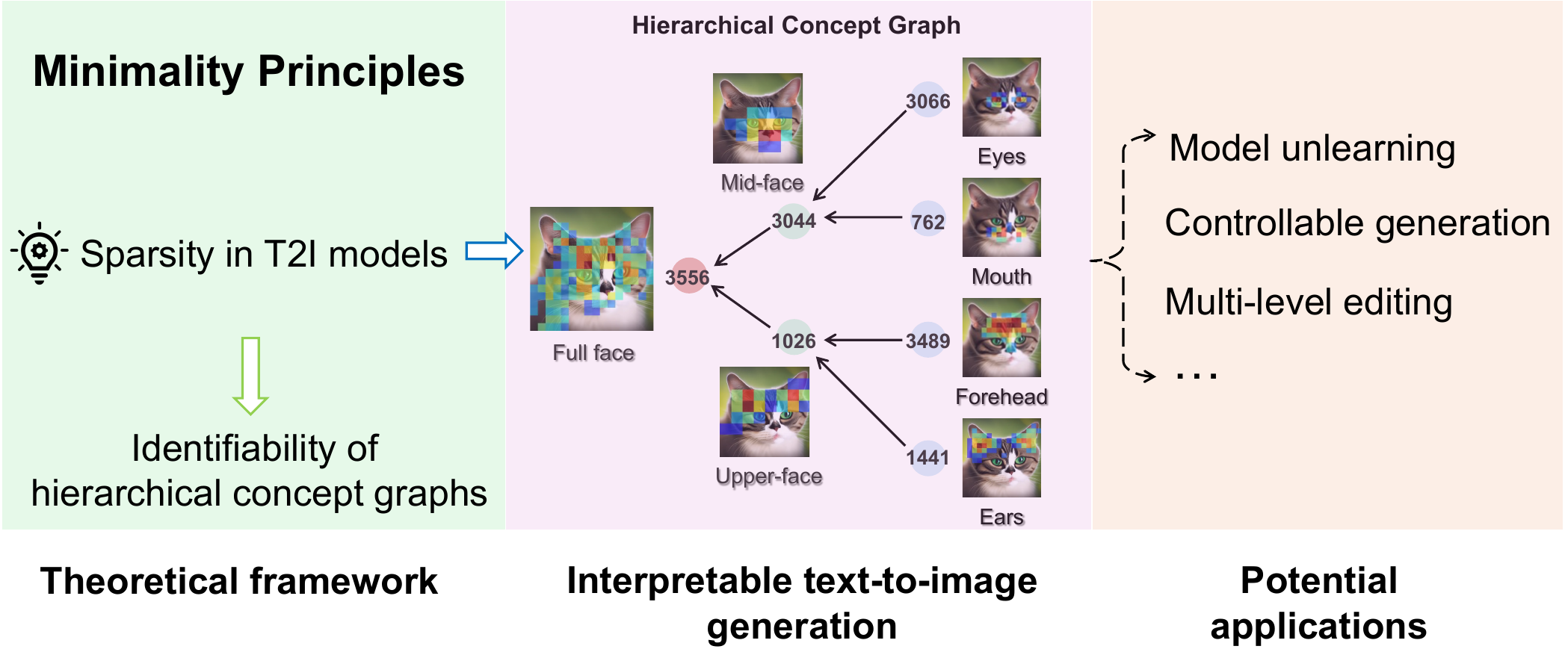}
\caption{
    \small
   Our causal minimality principle enables interpretable text-to-image generation through hierarchical concept graphs, with implications for downstream tasks.
    }
    \vspace{-0.7cm}
\label{fig:topic_figure}
\end{figure}

\section{Related Work} \label{sec:related_work}
\vspace{-0.1cm}
\paragraph{Hierarchical models.}
Complex real-world data distributions frequently exhibit inherent hierarchical structures among their underlying latent variables, a characteristic that has motivated extensive research. Initial explorations primarily focus on continuous latent variables with linear interactions~\citep{xie2022identification,huang2022latent,dong2023versatile,anandkumar2013learning}. Other lines of work have centered on discrete latent variables; however, these approaches are often constrained in their applicability to continuous data modalities like images~\citep{Pearl88, zhang2004hierarchical, choi2011learning, gu2023bayesian,kong2024learning}. Furthermore, prevalent latent tree models, which connect variables via a single undirected path~\citep{Pearl88,zhang2004hierarchical,choi2011learning}, risk oversimplifying the multifaceted relationships present in complex systems. More recently, while \citet{park2024geometry} make progress in capturing geometric properties of language model representations using hierarchical models, their work does not address the critical issue of latent variable identification. \citet{kong2023identification} tackle nonlinear, continuous latent hierarchical models, but their framework, operating under rather opaque functional conditions, falls short of component-wise identifiability, thereby leaving room for concept entanglement.
Our work distinctively investigates \emph{selection} hierarchical models, contending that their structural properties yield a more faithful representation of latent concepts in natural data distributions. 
In these models, latent variables function as colliders, a significant departure from their role as confounders in the aforementioned prior art. This critical distinction renders existing identification techniques largely inapplicable. 
To the best of our knowledge, we are the first to provide \emph{component-wise} identifiability for \emph{both continuous and discrete hierarchical selection models}.

\paragraph{Interpretability for generative models.}
Despite the remarkable advancements of generative models, their internal mechanisms often remain opaque. This presents a significant challenge to understanding and control. Considerable research has focused on obtaining interpretable features to enable more controllable generation. 
Early efforts center on analyzing the latent space of generative adversarial networks, e.g., ~\citep{harkonen2020ganspace,voynov2020unsupervised,shen2020interfacegan}. 
Recently, sparse autoencoders (SAEs) have gained prominence for interpreting hidden representations, particularly in language models. These studies show that SAEs trained on transformer residual-stream activations can identify latent units corresponding to linguistically meaningful features \citep{cunningham2023sparse, huben2023sparse,gao2024scaling,mudide2025efficient,shi2025routesae}. 
These interpretability techniques have also been successfully extended to diffusion models. \citet{surkov2024unpacking} reveal interpretable features and specialization across diffusion model blocks. Other work trains SAEs with lightweight classifiers on diffusion model features~\citep{kim2024textit} or steers generation away from undesirable visual attributes~\citep{huang2025tide}.
Our hierarchical approach is related to recent findings on the evolution of semantics during the diffusion process. It has been observed that high-level concepts, such as object shape and structure, tend to emerge in earlier, high-noise timesteps, while fine-grained, low-level details are synthesized in later, low-noise stages~\citep{patashnik2023localizing,tinaz2025emergence,mahajan2024prompting}. While these works provide valuable empirical validation of this phenomenon, our work offers a new perspective by framing these observations within a formal hierarchical, causal structure. We provide a theoretical foundation, rooted in causal minimality and selection models, to explain \emph{how} these concepts compose and, crucially, \emph{under what conditions} they can be provably identified.
Our approach also relates to generative concept bottleneck models, which achieve interpretability by forcing predictions through a bottleneck layer of concepts~\citep{ismail2024concept,kulkarni2025interpretable}. While these methods provide powerful intervention capabilities by design, our work differs by focusing on the discovery of the innate hierarchical and causal concept structure in the data. We provide the theoretical conditions for identifying these concepts component-wise, allowing us to then use this discovered graph for fine-grained multi-level interventions.

\paragraph{Decomposition-based interpretability.} Our work is fundamentally distinct from post-hoc, decomposition-based interpretability methods, such as the prototype-matching approach~\citep{chen2019looks}. This line of research, while pioneering, has known limitations (often stemming from its prototype-based implementation): its reliance on class-label supervision can lead to non-compositional, class-locked concepts~\citep{Rymarczyk2021ProtoPShare}, and its use of rigid patch-matching struggles with context and deformation~\citep{donnelly2022deformable,xue2024protopformer}.
In contrast, our approach is \emph{class/object agnostic} (similar to SAEs) and \emph{context-sensitive}, learning from the raw generative data without class labels. Our approach learns compositional, shared concepts (e.g., a single ``furry texture'' from ``cats'' and ``pandas'') rather than rigid, class-specific prototypes. This enables the causal, interventional control (e.g., Figure~\ref{fig:shared_concepts} and downstream tasks in Section~\ref{subsec:downstream_exp}) that prototype-matching cannot guarantee.
Please find additional related work in Appendix~\ref{app:related_work}.
\vspace{-0.1cm}

\vspace{-0.2cm}
\section{Deep Generative Models as Hierarchical Concept Models} \label{sec:formulation}

\paragraph{Notations.}
We denote random variables with upper-case characters (e.g., $X$) and values with lower-case characters (e.g., $x$).
We distinguish multidimensional objects with bold fonts (e.g., $\mX$) and refer to their dimensionality as $ n(\cdot) $.
We view multidimensional variables as \emph{sets} when appropriate (e.g., $\obs$ as $\{ \obse_{i} \}_{i \in [n(\obs)]}$). 
Parents $\parents{\cdot}$ and children $\children{\cdot}$ relations are defined based on the selection graph (Figure~\ref{fig:vision_causal_graph}).
If $ X $ has only one child $ Y $, we refer to $ X $ as a pure parent of $ Y $, i.e., $ X \in \pureparents{Y} $; if $X$ has other children than $ Y$, we refer to $ X$ as a hybrid parent of $Y$, i.e., $ X \in \hybridparents{Y} $.
We denote the set of natural numbers $ \{1, \dots, M \} $ as $ [M] $. More background information is in Appendix~\ref{app:dicussions}.

We denote the image as the continuous variable $\obs \in \R^{n(\obs)}$ and text as the discrete variable $\dis \in \N^{n(\dis)}$. Visual concepts are $\lat := [ \lat_{1}, \cdots, \lat_{\Lv} ]$, where $\Lv$ is the number of visual hierarchical levels and $\lat_{l} \in \R^{n(\lat_{l})}$ are concepts at level $l$ (Figure~\ref{fig:vision_causal_graph}).
The discrete variables $\mathbf{D}$ capture the discrete nature of textual concepts (like ``cat'' or ``bicycle''). In contrast, the visual concepts ($\mathbf{Z}$) are continuous to represent rich visual details. $\mathbf{D}$ acts as a selection variable that governs the joint configuration of the continuous $\mathbf{Z}$ variables. For instance, the discrete concept ``bicycle'' ($\mathbf{D}$) selects for a coherent arrangement of continuous visual features ($\mathbf{Z}$) representing wheels, a frame, and handlebars, rather than a random collection of those continuous parts.

\begin{wrapfigure}{r}{0.45\textwidth}
    \centering
    \begin{tikzpicture}[scale=0.45, line width=0.6pt, inner sep=0.6mm, shorten >=.1pt, shorten <=.1pt]
        \tikzset{
            znode/.style={text=brown},
            dnode/.style={text=blue},
            xnode/.style={text=black},
            every node/.style={align=center},
            edge from parent/.style={draw,->}
        }

        \node[dnode] (d1) at (-1.5,1) {$D_{1}$};
        \node[dnode] (d2) at (1.5,1) {$D_{2}$};

        \node[znode] (z11) at (-1.5,-1) {$Z_{1, 1}$};
        \node[znode] (z12) at (1.5,-1) {$Z_{1, 2}$};

        \node[znode] (z21) at (-3,-3) {$Z_{2,1}$};
        \node[znode] (z22) at (-1,-3) {$Z_{2,2}$};
        \node[znode] (z23) at (1,-3) {$Z_{2,3}$};
        \node[znode] (z24) at (3,-3) {$Z_{2,4}$};

        \node[znode] (z31) at (-3.75,-5) {$Z_{3,1}$};
        \node[znode] (z32) at (-2.25,-5) {$Z_{3,2}$};
        \node[znode] (z33) at (-0.75,-5) {$Z_{3,3}$};
        \node[znode] (z34) at (0.75,-5) {$Z_{3,4}$};
        \node[znode] (z35) at (2.25,-5) {$Z_{3,5}$};
        \node[znode] (z36) at (3.75,-5) {$Z_{3,6}$};

        \draw[-latex] (z11) -- (d1);
        \draw[-latex] (z11) -- (d2);
        \draw[-latex] (z12) -- (d2);

        \draw[-latex] (z21) -- (z11);
        \draw[-latex] (z22) -- (z11);
        \draw[-latex] (z23) -- (z11);

        \draw[-latex] (z23) -- (z12);
        \draw[-latex] (z24) -- (z12);
        \draw[-latex] (z22) -- (z12);

        \draw[-latex] (z31) -- (z21);
        \draw[-latex] (z31) -- (z22);

        \draw[-latex] (z32) -- (z21);
        \draw[-latex] (z32) -- (z22);

        \draw[-latex] (z33) -- (z22);

        \draw[-latex] (z34) -- (z23);

        \draw[-latex] (z35) -- (z22);
        \draw[-latex] (z35) -- (z24);

        \draw[-latex] (z36) -- (z23);
        \draw[-latex] (z36) -- (z24);

        \node[xnode, scale=1.2] (x) at (0,-7) {$\obs$};

        \draw[latex-, line width=1pt] (0,-5.25) -- (0,-5.75) -- (x);

        \node[dnode, align=center, font={\footnotesize}] at (-6,1) {Text};
        \node[znode, align=center, font={\footnotesize}] at (-6,-3) {Visual \\ Concepts};
        \node[xnode, align=center, font={\footnotesize}] at (-6,-7) {Image};
    \end{tikzpicture}
    \caption{
        \small
        \textbf{A visual concept graph.} We denote text as $\dis$, visual concepts as $\lat$, and the image as $\obs$. High-level concepts function as selection variables for low-level variables.
    }
    \label{fig:vision_causal_graph}
\end{wrapfigure}
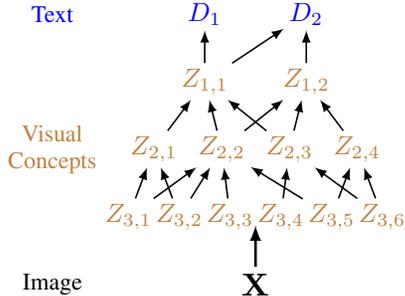

\paragraph{Hierarchical processes and selection mechanisms.}
Our framework conceptualizes high-level concepts as emerging from or being effects of lower-level concepts. 
This is captured by a \emph{selection mechanism}~\citep{zheng2024detecting,spirtes1995causal,hernan2004structural,zhang2008completeness,bareinboim2022recovering,forre2020causal,correa2019identification,chen2024modeling}, where variables $\mV_{l}$ at a higher level of abstraction (smaller $l$) is determined by its constituent, more detailed components $\mV_{l+1}$ (i.e., its ``parents'').
The selection function $g_{\mV_l}$ maps these lower-level constituents to the higher-level concept:
\begin{align} \label{eq:selection_mechanism}
\mV_{l} := g_{\mV_l}(\mV_{l+1}).
\end{align}
In other words, $\mV_l$ is a selection variable over $\mV_{l+1}$. 
In many natural data distributions of interest, we can only observe the data points for which the selection criterion is met, i.e., $ \mV_{l} $ only takes on a strict subset of its range $ \Omega $.
Therefore, the distribution of $ \mV_{l+1} $ is always the conditional distribution $\Pb{\mV_{l+1} | \mV_{l} }$.
This conditioning on $\mV_l$ can induce dependencies among components in $\mV_{l+1}$. 
For instance, if $V_{l+1,i} \to V_l \leftarrow V_{l+1,j}$, conditioning on $V_l$ makes $V_{l+1,i}$ and $V_{l+1,j}$ dependent. 

Under this formulation, one can leverage the inverse process of \eqref{eq:selection_mechanism} to sample observable data (images, text), proceeding from higher-level abstract concepts to lower-level concrete details:

\begin{align} \label{eq:vision_data_generating_process}
          \lat_{0} \sim \Pb{ \lat_{0} }, \quad \lat_{l} \sim \Pb{ \lat_{l} | \lat_{l-1} }, \quad l \in \{1, \dots, \Lv + 1\},
\end{align}
where we denote $ \lat_{0}:= \dis $ and $ \lat_{\Lv+1}:=\obs$.
While \eqref{eq:vision_data_generating_process} defines the generative pathway, the underlying structure is shaped by the selection principle of \eqref{eq:selection_mechanism}: the conditional distributions in \eqref{eq:vision_data_generating_process} are implicitly learned if one has learned selection mechanisms in \eqref{eq:selection_mechanism} and vice versa.

\paragraph{Why is this ``selection'' formulation?}
The ``selection'' perspective is critical for modeling how abstract concepts enforce coherence among their more concrete constituents. 
Consider generating an image of a ``bicycle'' (a high-level concept $\lat_l$). Its components -- wheels, frame, handlebars (lower-level concepts $\lat_{l+1}$) -- must not only be present but also be arranged in a specific, structurally sound configuration. 
Traditional hierarchical models~\citep{choi2011learning,Pearl88,zhang2004hierarchical} assume independent low-level concepts $ \late_{l+1, i} $ given high-level concepts $ \lat_{l} $ and stochastically sample these components, which could lead to unrealistic arrangements (e.g., wheels detached from the frame if the learned conditional is not perfect).
Therefore, these models must additionally incorporate causal edges within each hierarchical level to capture this conditional dependency, resulting in highly dense causal graphs. 
In contrast, the selection model, by positing that $\lat_l$ is an effect of a specific configuration of $\lat_{l+1}$, emphasizes that the ``bicycle'' concept arises from a coherent selection and composition of its parts. 
This structured dependency, induced by the selection mechanism, yields a much simpler graphical model to describe the natural data distribution, thus preferred by the \emph{minimality} principle.

\paragraph{Connections to text-to-image diffusion models.}
The iterative denoising process in diffusion aligns with our hierarchical data construction. These models involve a sequence of transformations $\{f_t\}_{t=1}^T$, parameterized by timestep $t$, that progressively restore a less noisy image $\obs_t$ from a more corrupted version $\obs_{t+1}$. 
As interpreted by \citet{kong2024learning}, each $f_{t+1}$ can be viewed as an autoencoder: it extracts a representation $\lat_{\cS(t+1)}$ ($\cS(t+1)$ indexes U-Net features associated with timestep $t+1$) from the noisy input $\obs_{t+1}$, and uses this representation to produce the less noisy $\obs_t$. 
In this view, representations $\lat_{\cS(t+1)}$ from higher noise levels (larger $t$, where $\obs_{t+1}$ is closer to pure noise) correspond to higher-level, more abstract concepts in our hierarchy (e.g., $\lat_l$ with smaller $l$), as fine-grained details are obscured by noise. 
Conversely, representations from lower noise levels (smaller $t$) capture more concrete details (e.g., $\lat_l$ with larger $l$).
The diffusion model's step-wise refinement thus mirrors our hierarchical generation $\Pb{\lat_{l+1} | \lat_l}$, with the initial text prompt $\dis$ typically guiding the most abstract visual concepts (e.g., $\lat_1 \sim \Pb{\lat_1 | \dis}$, Figure~\ref{fig:topic_figure}).
In our empirical analysis (Section~\ref{sec:vision_exp}), we explicitly map distinct diffusion timesteps to these hierarchical levels: high noise levels (e.g., $t=899$) correspond to abstract concepts, and low noise levels (e.g., $t=100$) to fine-grained details.

\paragraph{Identifiability and interpretability.}
In light of the connection, a crucial question remains: are the internal representations learned by these models (e.g., U-Net features, transformer activations) truly reflective of the ground-truth concepts of the data, or are they merely effective for the generation task without being inherently interpretable and controllable? 
This motivates the need for \emph{identifiability} guarantees that affirm the equivalence between the two worlds, which we present in Section~\ref{sec:theory}.

\section{Identifiable Representations under Causal Minimality} \label{sec:theory}

We first formally define our core theoretical principle, \emph{causal minimality}~\citep{peters2017elements,spirtes2000causation,hitchcock1997probabilistic}: Among all causal models that can explain the observed data, the true model is the simplest one. 
This principle is the key to our goal of identifiability (Definition~\ref{def:componentwise}). Causal minimality, as a principle, manifests as concrete, enforceable mechanisms in specific settings. In the setting we study, this mechanism is sparse connectivity in the causal graph (our minimality condition, \ref{cond:vision_identification}-\ref{asmp:sparsity}). Enforcing this sparsity is thus the practical mechanism that provides theoretical guarantees for identifiability.

For visual concepts, minimality manifests as a preference for \emph{sparse graphical dependencies} within the latent hierarchy. 
This implies that concepts are formed through a limited set of direct causal influences, making the underlying structure easier to discern.

A key challenge we address is the identifiability of \emph{hierarchical selection models}. In these models, higher-level concepts are effects of lower-level concepts. This contrasts with traditional hierarchical models where causality often flows from abstract to concrete, and where latent variables typically act as confounders~\citep{Pearl88,zhang2004hierarchical,choi2011learning,gu2023bayesian,kong2024learning,xie2022identification,huang2022latent,dong2023versatile,kong2023identification,anandkumar2013learning}. In our selection framework, latent variables act as colliders, rendering many existing identifiability results inapplicable. This distinction necessitates the novel theoretical development presented herein.
Our goal is to achieve \emph{component-wise identifiability}:
\begin{definition}[Component-wise Identifiability] \label{def:componentwise}
Let $\lat$ and $\hat{\lat}$ be variables under two model specifications. We say that $\lat$ and $\hat{\lat}$ are \emph{identified component-wise} if there exists a permutation $\pi$ such that for each $i \in [n(\lat)]$, $ \hat{\late}_{i} = h_{i}(\late_{\pi(i)})$ where $ h_{i} $ is an invertible function.
\end{definition}
This strong form of identifiability ensures that each learned latent component $\hat{\late}_{i}$ corresponds to a single true latent component $\late_{\pi(i)}$. This is vital for unambiguous interpretation and targeted control. We assume the standard faithfulness condition~\citep{spirtes2001causation}, meaning the graphical model accurately reflects all conditional independence relations in the data.


In the following, we consider the identification of continuous latent visual concepts $\lat$. 

\newcommand{\wvectorcontent}{%
\ww( \tilde{\latl}, \latel ) 
    = \Big(
        \frac{\partial \log p \left(\tilde{\latl} | \latel \right)}{\partial \tilde{\latel}_{1} }, \ldots, \frac{\partial \log p \left(\tilde{\latl} | \latel \right)}{\partial \tilde{\latel}_{n( \tilde{\latl} ) } }
    \Big)
}

\begin{restatable}[Visual Concept Identification Conditions]{condition}{visionidentificationconditions} \label{cond:vision_identification} {\ }
    \begin{enumerate}[label=\roman*,leftmargin=2em, topsep=0.5pt, partopsep=0pt, itemsep=-0.0em]
        \item \label{asmp:invertibility} \textbf{Informativeness}: There exists a diffeomorphism $g_{l}: \left( \lat_{l}, \bm{\epsilon}_{l} \right) \mapsto \obs$ for $ l \in [0, L] $, where $\bm{\epsilon}_{l} $ denotes independent exogenous variables.
        \item \label{asmp:smooth_density} \textbf{Smooth Density}: The probability density function $ p( \latl_{l+1} | \latl_{l} ) $ is smooth for any $l \in [\Lv]$.
        \item \label{asmp:linear_independence} \textbf{Sufficient Variability}: For each $\late$ and its parents $\tilde{\lat}:= \parents{\late}$, at any value $\tilde{\latl}$ of $\tilde{\lat}$, there exist $n(\tilde{\lat})+1$ distinct values of $\late$, denoted as $\{\latel^{(n)}\}_{n=0}^{n(\tilde{\late})}$, such that the vectors $\ww(\tilde{\latl}, \latel^{(n)})-\ww(\tilde{\latl}, \latel^{(0)})$ are linearly independent where $\wvectorcontent$.
        
        \item \label{asmp:sparsity} \textbf{Sparse Connectivity (Minimality)}: For each parent concept $\tilde{\late}$, there exists a subset of its children $\lat \subseteq \children{\tilde{\late}}$ such that their \emph{only} common parent is $\tilde{\late}$, i.e., $ \bigcap_{ \late \in \lat } \parents{ \late } = \{ \tilde{\late} \} $. 
    \end{enumerate}
\end{restatable}

\paragraph{Interpreting Condition~\ref{cond:vision_identification}.}
Condition~\ref{cond:vision_identification}-\ref{asmp:invertibility} ensures that the observed data $\obs$ (e.g., an image) fully captures the information about the latent concepts $\lat_l$. This is a natural assumption as high-dimensional observations contain rich information. Condition~\ref{cond:vision_identification}-\ref{asmp:smooth_density} is a standard regularity assumption for analysis.
Both are common in nonlinear ICA literature~\citep{hyvarinen2016unsupervised,hyvarinen2019nonlinear,khemakhem2020icebeem,khemakhem2020variational,von2021self,kong2023identification}.
Condition~\ref{cond:vision_identification}-\ref{asmp:linear_independence} formalizes the idea that distinct lower-level concepts (e.g., ``wheel,'' ``door'') respond in sufficiently distinct ways to changes in a shared higher-level concept (e.g., ``car''), thus facilitating the identification of these lower-level concepts.
Condition~\ref{cond:vision_identification}-\ref{asmp:sparsity} is an instantiation of causal minimality for visual concepts. It posits that the causal graph of concepts is sparse -- each concept has a unique ``fingerprint'' in terms of its connectivities. 
This sparsity is crucial for disentanglement~\citep{zheng2022identifiability,lachapelle2024nonparametric,xu2024sparsity,lachapelle2022disentanglement,lachapelle2022synergies} and is a less restrictive assumption than, for example, pure observed children for each latent variable~\citep{arora2012learning,arora2012practical,moran2021identifiable}. 
This condition formalizes a core principle: concepts are learned through \emph{comparison}. A concept is identifiable only if the data is rich enough to distinguish it from alternatives. For instance, if ``Knight'' and ``Horse'' always co-occur, they are learned as a fused concept; learning them separately requires data that breaks this correlation.

\begin{restatable}[Visual Concept Identification]{theorem}{visionidentification} \label{thm:vision_identification} {\ } 
Assume the process for visual concepts in \eqref{eq:vision_data_generating_process}. If a model specification $\bm\theta_{\mathrm{V}}$ satisfies Condition~\ref{cond:vision_identification}, and an alternative specification $\hat{\bm\theta}_{\mathrm{V}}$ satisfies Conditions~\ref{cond:vision_identification}-\ref{asmp:invertibility} and \ref{cond:vision_identification}-\ref{asmp:smooth_density}, along with a sparsity constraint such that for corresponding $\hat{\late}$ and $\late$:
\begin{align} \label{eq:sparsity_constraint}
    n( \parents{ \hat{\late} } ) \leq n( \parents{ \late } ),
\end{align}
then, if both models $\bm\theta_{\mathrm{V}}$ and $\hat{\bm\theta}_{\mathrm{V}}$ generate the same observed data distribution $\Pb{\obs}$, the latent visual concepts $\lat_{l}$ are component-wise identifiable for every level $l \in [\Lv]$.
\end{restatable}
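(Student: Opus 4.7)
The plan is to prove identifiability by induction over hierarchical levels, proceeding top-down in the generative sampling order $\dis \to \lat_1 \to \cdots \to \lat_L$. At each level, I combine a score-matching argument in the spirit of identifiable nonlinear ICA with the two sparsity ingredients (Condition~\ref{cond:vision_identification}-\ref{asmp:sparsity} and~\eqref{eq:sparsity_constraint}) to pin down a component-wise correspondence between the true and learned concepts.

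For the base case, Condition~\ref{cond:vision_identification}-\ref{asmp:invertibility} supplies a diffeomorphism between $(\lat_1, \bm{\epsilon}_1)$ and $\obs$ in each model. Equality of the observational distributions together with independence of the exogenous noise therefore induces a smooth invertible map $h_1 : \lat_1 \to \hat{\lat}_1$. Matching the conditionals $p(\lat_1 \mid \dis)$ and $\hat{p}(\hat{\lat}_1 \mid \dis)$ under this map, taking logarithmic derivatives with respect to components of $\lat_1$, and evaluating at the $n(\lat_1)+1$ distinct values of $\dis$ supplied by Condition~\ref{cond:vision_identification}-\ref{asmp:linear_independence}, the linear independence of score differences yields a homogeneous linear system that forces the off-support entries of the Jacobian of $h_1$ to vanish, i.e., the Jacobian respects the parent-child structure of the selection graph.

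For the inductive step, assume $\lat_1, \ldots, \lat_{l-1}$ have been identified component-wise. Applying the same score-matching argument to each conditional $p(\late \mid \parents{\late})$ for $\late \in \lat_{l-1}$ (with the identified $\late$ as the auxiliary variable and its parents in $\lat_l$ as the target), Condition~\ref{cond:vision_identification}-\ref{asmp:linear_independence} again restricts the support of the Jacobian at level $l$. The sparsity ingredients then do the disentangling: Condition~\ref{cond:vision_identification}-\ref{asmp:sparsity} supplies, for each true concept $\tilde{\late} \in \lat_l$, a subset of children whose only common parent is $\tilde{\late}$---a unique ``fingerprint'' distinguishing $\tilde{\late}$ from every other concept at level $l$. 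The sparsity constraint~\eqref{eq:sparsity_constraint} prevents the learned graph from diluting this fingerprint by attaching spurious extra parents. A counting argument then forces the Jacobian to be a generalized permutation matrix, composed with an element-wise invertible transformation, yielding component-wise identifiability at level $l$. Iterating through all $L$ levels completes the proof.

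The chief technical obstacle is the selection structure itself: latent concepts act as colliders rather than confounders, so conditioning on a child $\late$ generically induces dependencies among its parents in $\lat_l$. This breaks the factorization that underlies standard hierarchical identifiability proofs, so the score-matching step must be executed on the selection-induced conditional $p(\late \mid \parents{\late})$ while carefully tracking how these sibling dependencies interact with the sparsity-based fingerprint and with the change-of-variable Jacobian. A secondary subtlety is keeping the sampling-order induction aligned with the reversed causal orientation of the selection graph so that the variability and fingerprint arguments target the correct conditional at each step; an incorrect orientation would make the linear-independence condition vacuous or render the intersection of parents in Condition~\ref{cond:vision_identification}-\ref{asmp:sparsity} inapplicable.
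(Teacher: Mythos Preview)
Your overall architecture matches the paper's: a top-down induction with $\dis$ (and then the already-identified $\lat_{l-1}$) playing the role of an auxiliary variable, a score-matching/Jacobian argument in the style of nonlinear ICA to pin down blocks of parents, and then the two sparsity ingredients to finish. However, the step you label a ``counting argument'' is where the paper does substantially more work, and as written your proposal has a gap there.

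After the score-matching step, what you actually have is a block constraint on the Jacobian of the global diffeomorphism $h:\hat\lat_l\to\lat_l$: for each child $\late\in\lat_{l-1}$, the true block $\parents{\late}$ is an invertible function of the estimated block $\widehat{\parents{\late}}$ and vice versa, with $n(\parents{\late})=n(\widehat{\parents{\late}})$ (the sparsity constraint closing one inequality). Condition~\ref{cond:vision_identification}-\ref{asmp:sparsity} then gives, for each $\tilde\late\in\lat_l$, a fingerprint set $K$ of children with $\bigcap_{k\in K}\parents{\late_k}=\{\tilde\late\}$. But to conclude that the corresponding row of the Jacobian is a singleton you need $\bigl|\bigcap_{k\in K}\widehat{\parents{\late_k}}\bigr|=1$ in the \emph{estimated} model, and matching the individual block sizes $n(\widehat{\parents{\late_k}})=n(\parents{\late_k})$ does not by itself force intersection sizes to match. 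The paper supplies two extra ingredients here: first, an inclusion--exclusion lemma showing that once all \emph{union} cardinalities $\bigl|\bigcup_{k\in H}\parents{\late_k}\bigr|$ agree between the two models (which does follow from the block structure plus full rank of $\mJ_h$), all \emph{intersection} cardinalities agree as well; and second, an intersection-block identification result (imported from \citet{Kong2023understanding}) that upgrades block identification of each $\parents{\late_k}$ to identification of any finite intersection $\bigcap_{k\in H}\parents{\late_k}$. Only with at least the inclusion--exclusion step does the fingerprint transfer to the estimated graph and force the Jacobian to a generalized permutation. Your ``counting argument'' needs to be replaced or fleshed out with this union-to-intersection passage.

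Two smaller remarks. Your base-case sentence ``equality of the observational distributions\ldots induces a smooth invertible map $h_1:\lat_1\to\hat\lat_1$'' is premature: matching $\Pb{\obs}$ under $g_1,\hat g_1$ gives a diffeomorphism on $(\lat_1,\bm\epsilon_1)\leftrightarrow(\hat\lat_1,\hat{\bm\epsilon}_1)$, and isolating the $\lat_1\leftrightarrow\hat\lat_1$ block is precisely the content of the score-matching argument applied with the full $\dis$ as auxiliary---it is not a preliminary but the first application of the lemma. And your closing discussion of collider-induced dependence, while conceptually correct, is not the obstacle the paper's proof actually confronts; the score-matching lemma runs on $p(\parents{\late}\mid\late)$ directly and does not require parents to be conditionally independent, so the technical load is carried by the block/intersection bookkeeping above rather than by any special handling of sibling dependence.
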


\paragraph{Proof sketch for Theorem~\ref{thm:vision_identification}.}
The proof proceeds by identifying the hierarchical model level by level, from the top (most abstract concepts) $\lat_1$ downwards to $\lat_{\Lv}$.
1) The paired text data $\dis$ acts as an auxiliary variable, providing diverse ``influences'' on the top-level $\lat_1$. Condition~\ref{cond:vision_identification}-\ref{asmp:linear_independence} ensures these interventions have distinguishable effects. Analogous to techniques in nonlinear ICA~\citep{hyvarinen2016unsupervised,hyvarinen2019nonlinear,kong2022partial}, each component $\disc$ allows the identification of the subspace of $\lat_1$ variables it influences.
2) With these subspaces identified, one can identify the intersection of these subspaces~\citep{von2021self,yao2023multi,Kong2023understanding}.
Therefore, if the graphical structure is sufficiently sparse, as specified in Condition~\ref{cond:vision_identification}-\ref{asmp:sparsity}, one can identify the top-level latent variable $ \lat_{1} $ component-wise.
3) Once $\lat_1$ is identified, its components can serve as the auxiliary variables to identify the next level, $\lat_2$. This process is repeated iteratively down the hierarchy, identifying $\lat_l$ using the already identified $\lat_{l-1}$.

\paragraph{Implications for text-to-image diffusion models.}
Theorem~\ref{thm:vision_identification} underscores that the \emph{sparsity constraint \eqref{eq:sparsity_constraint} is pivotal for identifying true visual concepts}. 
In practice, this constraint is instantiated through a two-step process: 1) Level-specific concept learning: We train $K$-sparse SAEs on features at the specific timesteps defined in Section~\ref{sec:formulation}. This approximates the sparsity condition required by Theorem~\ref{thm:vision_identification}. 2) Cross-level causal discovery: We then apply causal discovery algorithms (e.g., PC~\citep{spirtes2001causation}) across these sparse features to construct the hierarchical graph, validating that the learned representations align with the theoretical identification guarantees.

\section{Experiments} 
\label{sec:vision_exp}

\paragraph{Evaluation design and objectives.} We design our experiments to validate our theoretical framework in two ways. In Section~\ref{subsec:interpretation_exp}, we provide a direct empirical test of our theory: we apply the sparsity constraints derived from causal minimality (Condition~\ref{cond:vision_identification}) and show that we can, as predicted, extract a meaningful and interpretable hierarchical concept graph. In Section~\ref{subsec:downstream_exp}, we demonstrate the utility of these identified concepts. If our concepts are truly component-wise identifiable (Definition~\ref{def:componentwise}), they should be individually controllable. We test this via a suite of challenging downstream tasks—including model unlearning, controllable image generation, and multi-level editing. For example, we compare against state-of-the-art unlearning methods to rigorously benchmark our concept removal capabilities. Our objective is to show that our theory not only finds interpretable concepts but also provides a practical mechanism for fine-grained, reliable model control. More detailed settings for each experiment are provided in their respective subsections.


\paragraph{Hierarchical causal analysis.}
Our theoretical framework motivates an empirical analysis that differs from standard interpretability approaches. 
Following the framework established in Sections~\ref{sec:formulation} and \ref{sec:theory}, we apply our two-step identification process to Stable Diffusion (SD) 1.4~\citep{rombach2022high} and Flux.1-Schnell~\citep{flux2024} (Appendix~\ref{app:empirical_results}). We analyze feature representations at the previously defined timesteps (899, 500, and 100) to extract and verify the hierarchical concept graph.

\paragraph{Benefits.}
This hierarchical perspective provides two main benefits. First, it enables compositional editing. For a complex object like ``a textured tree stump'', our analysis can distinguish the ''stump'' (a mid-level concept) from its ``texture'' (a low-level one), allowing for independent steering. This is a fine-grained control challenging for non-hierarchical methods that tend to learn entangled features (see Table~\ref{fig:multi-level}). Second, it allows for targeted intervention. By identifying a concept's level, we can inject a steered feature back into the diffusion process only at its corresponding timestep, which helps in reducing the unwanted artifacts that can arise from applying steering globally across all timesteps (see Figure~\ref{fig:concept_removal}). More details in Appendix~\ref{app:implementation_details} and Figure~\ref{fig:method_diagram}.

\begin{figure}[t]
\centering
\includegraphics[width=0.9\textwidth]{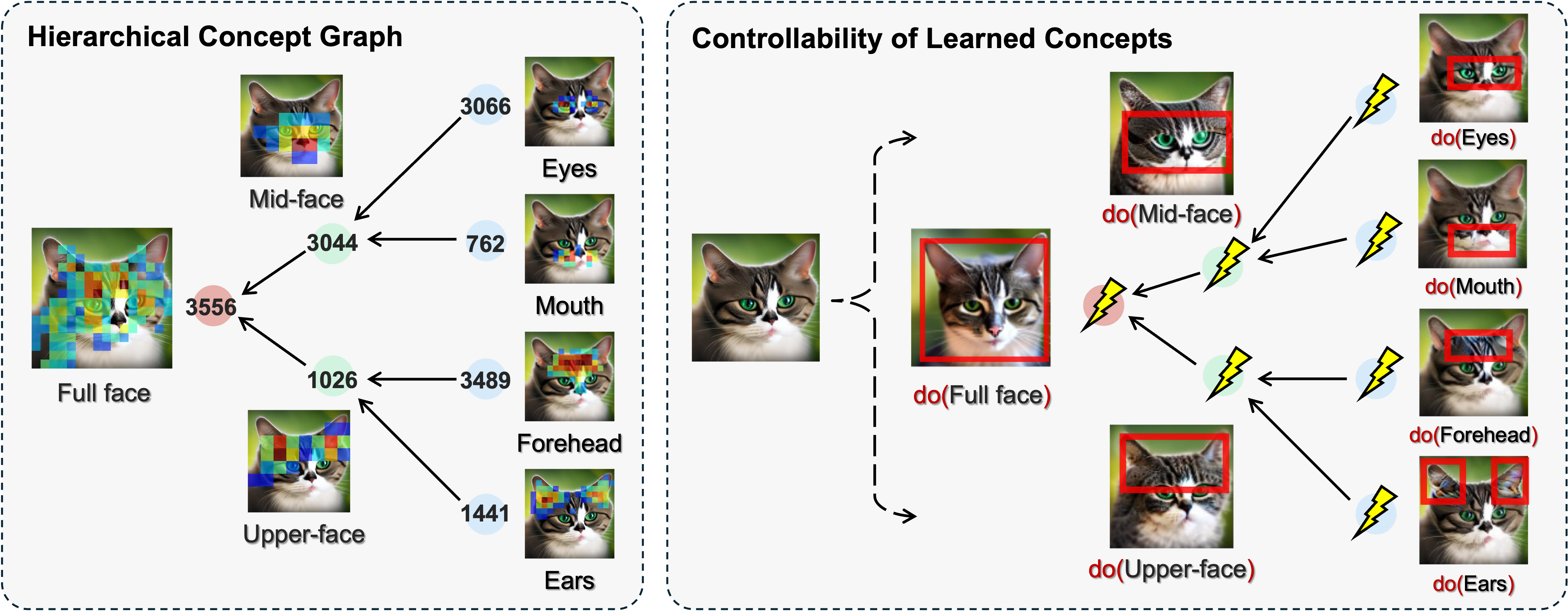}
\caption{
    \small
    \textbf{Examples of hierarchical concept graphs for text-to-image models.} Our method successfully recovers meaningful hierarchical structures, where each node encodes distinct semantic concepts. On the right, we demonstrate feature steering, where manipulating individual nodes leads to changes in the output that align with their position in the hierarchy.
    Intervening on a high-level concept in the learned graph (``Full face'') alters the cat's entire facial structure and fur pattern. In contrast, intervening on a learned lower-level concept (e.g., ``Eye'') produces a much more localized edit, changing only the shape and color of the eyes while leaving the rest of the face intact. More examples in Appendix~\ref{app:empirical_results}.
}
\vspace{-0.2cm}
\label{fig:vision_causal}
\end{figure}

\begin{table}[t!]
\centering
\setlength{\tabcolsep}{3.6pt}
\begin{tabular}{lcccccccccc}
\hline
\textbf{Method} & \textbf{I2P $\downarrow$} & \multicolumn{4}{c}{\textbf{RING-A-BELL $\downarrow$}} & \textbf{P4D $\downarrow$} & \textbf{UATK $\downarrow$} & \multicolumn{2}{c}{\textbf{COCO}} \\
\cline{3-6} \cline{9-10}
 &  & K77 & K38 & K16 & AVG &  &  & FID $\downarrow$ & CLIP $\uparrow$ \\
\hline
SD 1.4  & 17.8  & 85.26 & 87.37 & 93.68 & 88.10 & 98.70 & 69.70 & 16.71 & \textbf{31.3} \\
ESD & 2.87  & 20.00 & 29.47 & 35.79 & 28.42 & 15.49 & 2.87  & 18.18 & 30.2 \\
SA & 2.81  & 63.15 & 56.84 & 56.84 & 58.94 & 12.68 & 2.81  & 25.80 & 29.7 \\
CA & 1.04  & 86.32 & 91.69 & 94.26 & 90.76 & 5.63  & 1.04  & 24.12 & 30.1 \\
MACE  & 1.51  & 2.10  & \textbf{0.00}  & \textbf{0.00}  & \textbf{0.70}  & 2.82  & 1.51  & \textbf{16.80} & 28.7 \\
UCE & 0.87  & 10.52 & 9.47  & 12.61 & 10.87 & 9.86  & 0.87  & 17.99 & 30.2 \\
RECE & 0.72  & 5.26  & 4.21  & 5.26  & 4.91  & 5.63  & 0.72  & 17.74 & 30.2 \\
SDID  & 3.77  & 94.74 & 95.79 & 90.53 & 93.68 & 69.54 & 30.99 & 22.16 & 31.1 \\
SLD-MAX & 1.74  & 23.16 & 32.63 & 42.11 & 32.63 & 9.14  & 2.44  & 28.75 & 28.4 \\
SLD-STRONG   & 2.28  & 56.84 & 64.21 & 61.05 & 60.70 & 33.10 & 3.10  & 24.40 & 29.1 \\
SLD-MEDIUM    & 3.95  & 92.63 & 88.42 & 91.05 & 90.70 & 24.00 & 1.98  & 21.17 & 29.8 \\
SD1.4-NegPrompt & 0.74  & 17.89 & 40.42 & 34.74 & 31.68 & 10.00 & 1.46  & 18.33 & 30.1 \\
SAFREE        & 1.45  & 35.78 & 47.36 & 55.78 & 46.31 & 10.56 & 1.45  & 19.32 & 30.1 \\
TRASCE        & 0.45  & 1.05  & 2.10  & 2.10  & 1.75  & 3.97  & \textbf{0.70}  & 17.41 & 29.9 \\
ConceptSteer    & 0.36  & 3.16  & 8.42  & 9.47  & 7.02  & 1.99  & 2.11  & 18.67 & 30.8 \\
\hline
\textbf{Ours}        & \textbf{0.25} & \textbf{1.05} & \textbf{0.00} & 2.11 & {1.05} & \textbf{0.66} & 2.11 & 17.02 & \textbf{31.3} \\ \hline
\end{tabular}
\caption{\textbf{Model unlearning comparisons}. Our method delivers competitive results on unlearning tasks without compromising standard text-to-image generation. See Appendix~\ref{app:dicussions} for details.}
\label{tab:unlearning}
\vspace{-0.3cm}
\end{table}

\begin{figure}[t]
{
\setlength{\belowcaptionskip}{-5pt}
    \centering
    \includegraphics[width=\linewidth]{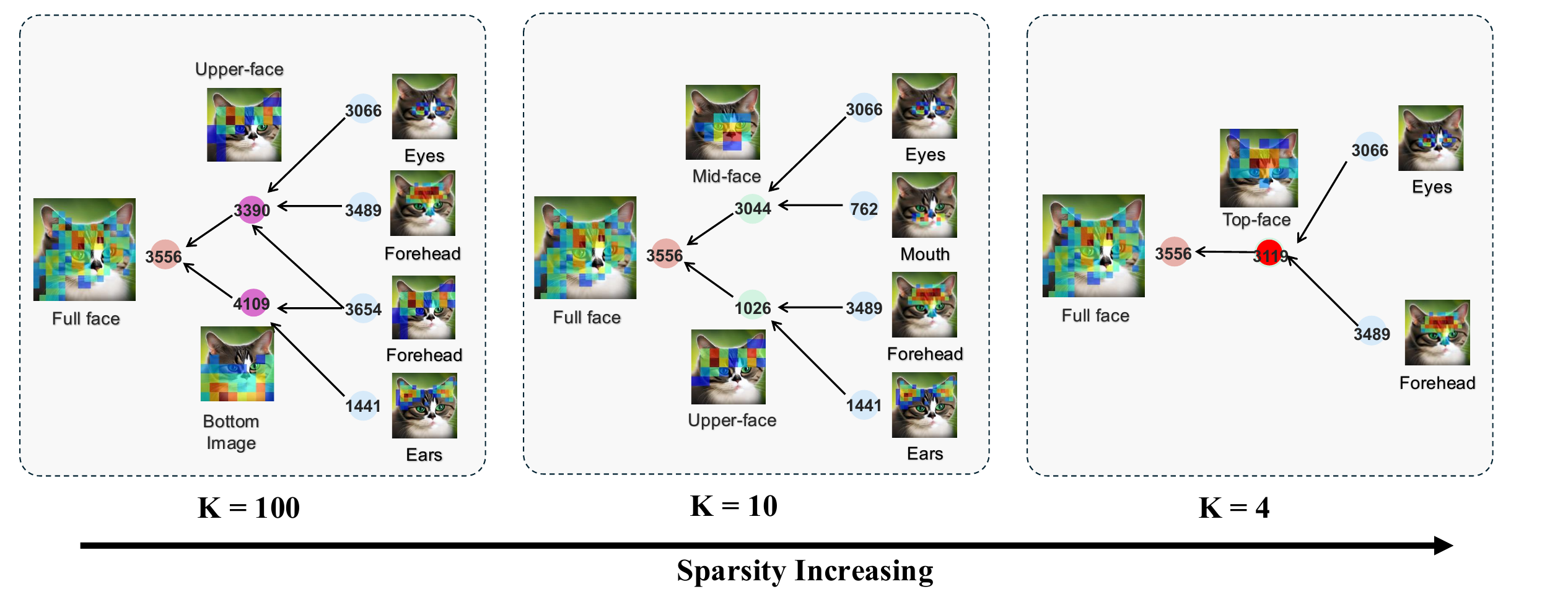}
    \caption{
        \small
        \textbf{Ablation studies on the sparsity constraint}. 
        We control feature sparsity at timestep 500 by varying the number of features (K) in SAE. Without enforcing sparsity, the resulting concepts tend to be dense, and the features are less interpretable. Conversely, higher sparsity leads to a more interpretable, sparser graph. However, when sparsity becomes too high, the resulting graph may become overly sparse and fail to adequately capture the generation of the cat face. }
    \label{fig:ablation}}
\end{figure}

\begin{table}[h]
\centering
\vspace{0.5em}
\begin{tabular}{c c c c c c c c c c}
\toprule
\multicolumn{5}{c}{\textbf{(a) Spatial activation spread}} & \multicolumn{5}{c}{\textbf{(b) SAE-deactivated generation comparison}} \\
\cmidrule(lr){1-5} \cmidrule(lr){6-10}
\textbf{Timestep} & \textbf{Top1} & \textbf{Top3} & \textbf{Top5} & \textbf{Top10} &
\textbf{L1} & \textbf{LPIPS} & \textbf{CLIP} & \textbf{DINO} \\
\midrule
100 & 0.27 & 0.21 & 0.19 & 0.15 & 0.004 & 0.002 & 0.999 & 0.999 \\
500 & 0.30 & 0.25 & 0.21 & 0.17 & 0.013 & 0.020 & 0.995 & 0.993 \\
899 & 0.53 & 0.41 & 0.33 & 0.24 & 0.070 & 0.220 & 0.948 & 0.903 \\
\bottomrule
\end{tabular}
\caption{\textbf{Quantitative analyses across different noise levels.} 
(a) Spatial activation spread: average proportion of pixels influenced by the top-$k$ SAE activations. Higher timesteps affect a larger spatial area, indicating that SAEs at noisier steps capture more global, distributed concepts. 
(b) SAE-deactivated generation comparison: similarity metrics between original and SAE-deactivated images. Deactivation at higher timesteps produces greater perceptual and semantic changes, supporting the presence of a hierarchical organization of concepts across timesteps.}

\label{tab:hierarchical_analysis}
\end{table}

\subsection{Interpretability Analysis} 
\label{subsec:interpretation_exp}

\paragraph{Hierarchical concept graph.}
Figure~\ref{fig:vision_causal} illustrates a hierarchical graph learned through our approach (more in Appendix~\ref{app:empirical_results}). On the left, we display activation maps of different SAE features. 
Brown nodes (SAE nodes trained on timestep 899) capture high-level features, such as node 3556 representing an entire cat face. Green nodes (timestep 500) reflect mid-level features, like node 3044 capturing the central face. Blue nodes (timestep 100) capture fine details—node 3066 activates on the eyes and node 762 on the mouth. This demonstrates a clear progression from coarse to fine-grained concepts across timesteps. 
To thoroughly examine the existence of the hierarchical concept graph, we conduct two complementary experiments demonstrating that activations at higher timesteps capture more global semantics, while those at lower timesteps capture more localized details.
First, we quantify the spatial spread of activations across timesteps. For each SAE, we compute attribution maps for its top feature indices. Given an SAE feature of shape $64 \times 64 \times 5120$, we compute a $64 \times 64$ attribution map. Applying a 0.1 threshold yields a binary attribution map, from which we measure the proportion of activated pixels. 
Across 1,000 samples, approximately 280, 630, 880, and 1,400 unique concepts are activated for $K=1,3,5,$ and $10$, respectively. 
Activations at timestep 899 influence a larger spatial area, indicating that higher timesteps capture more global, distributed concepts.
Second, we generate images from 10,000 COCO prompts and deactivate the top-1 SAE activation at each timestep. Comparing the modified generations with the originals shows that deactivations at noisier timesteps cause substantial, global changes, while those at less noisy timesteps produce localized effects. These results confirm that features at different noise levels encode distinct abstraction levels, supporting the hierarchical concept graph.

\paragraph{Concept steering in hierarchical graphs.}
We conduct concept steering using our discovered features, as shown on the right side of Fig.~\ref{fig:vision_causal} (more in Appendix~\ref{app:empirical_results}). 
Given a model intermediate feature $x$, the SAE encoder $E$ and decoder $D$ are trained to reconstruct $x$. 
To steer a specific concept, we obtain the latent representation $z=E(x)$, and extract the steering vector $v$ corresponding to the desired feature. We then modify the original feature to create a steered version $x^\prime =x+\lambda D(v)$, where $\lambda$ modulates the strength. By feeding the steered $x^\prime$ back into the diffusion process at the same timestep, we generate images that reflect the influence of the selected concept. For example, steering node 3556 -- associated with the entire face of a cat -- results in a significantly altered cat face. Steering the green node 1026 modifies only the upper part of the face, illustrating that it encodes localized information specific to that region.

\begin{table}[t]
\centering
\begin{tabular}{lccc}
\hline
\textbf{Metric} & \textbf{SD 1.4} & \textbf{SD1.4 (SAE w/o hier.)} & \textbf{SD1.4 (Ours)} \\
\hline
Add tabby pattern – CLIP-I $\downarrow$ & 0.91 $\pm$ 0.05 & 0.83 $\pm$ 0.07 & \textbf{0.93 $\pm$ 0.04} \\
Add tabby pattern – CLIP-T $\uparrow$ & 0.27 $\pm$ 0.00 & \textbf{0.28 $\pm$ 0.02} & \textbf{0.28 $\pm$ 0.01} \\
Add mountains – CLIP-I $\downarrow$ & 0.84 $\pm$ 0.06 & 0.83 $\pm$ 0.04 & \textbf{0.91 $\pm$ 0.03} \\
Add mountains – CLIP-T $\uparrow$ & \textbf{0.33 $\pm$ 0.01} & 0.32 $\pm$ 0.01 & \textbf{0.33 $\pm$ 0.01} \\
Replace rock w/ stump – CLIP-I $\downarrow$ & 0.93 $\pm$ 0.02 & 0.95 $\pm$ 0.02 & \textbf{0.96 $\pm$ 0.02} \\
Replace rock w/ stump – CLIP-T $\uparrow$ & \textbf{0.31 $\pm$ 0.01} & 0.29 $\pm$ 0.01 & \textbf{0.31 $\pm$ 0.01} \\
\hline
\end{tabular}
\caption{\textbf{Controllable image generation results}. Our method achieves the best CLIP-I metric, demonstrating greater fidelity to the input images, while reliably executing the target edits.}
\label{tab:controll_image_gen}
\vspace{-0.3cm}
\end{table}

\paragraph{Ablation.}
As established in the theoretical framework, sparsity is crucial for identifiability. To empirically validate this, we visualize the resulting causal graphs under varying levels of sparsity, as shown in Fig.~\ref{fig:ablation} (more in Appendix~\ref{app:empirical_results}). When sparsity is not enforced, the resulting graph becomes overly dense, making it difficult to interpret and diminishing its semantic clarity. Conversely, imposing excessive sparsity leads to an overly pruned graph that lacks sufficient structure to meaningfully explain the generation process, such as in the case of the cat image. These observations highlight the importance of balancing sparsity to preserve interpretability while maintaining explanatory power.

\begin{figure}[t]
    \centering
    \includegraphics[width=0.86\linewidth]{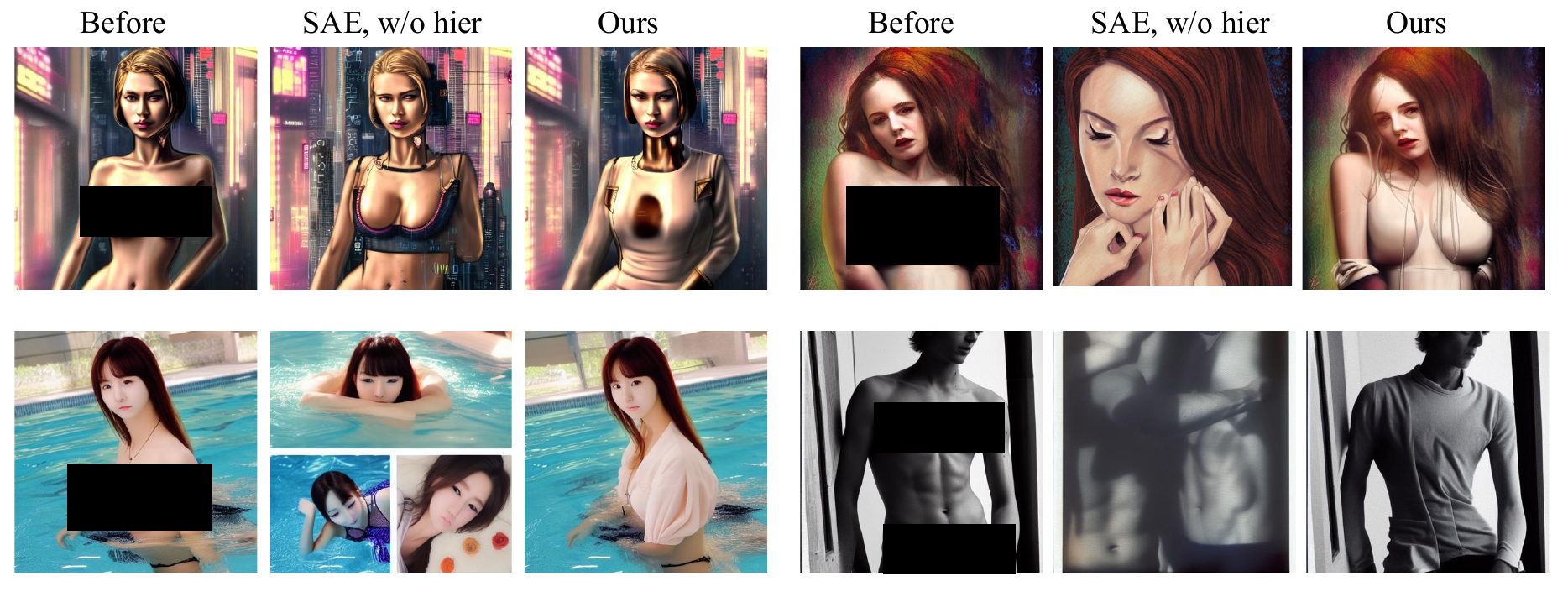}
    \caption{
    \textbf{Generated samples with P4D prompts~\citep{chin2023prompting4debugging}.} 
    The Stable Diffusion model is vulnerable to the prompts in the p4d dataset, producing unsafe images. When the hierarchical relationship across timesteps is not considered, negative steering with SAE results in drastic changes to the output. In contrast, our method learns to apply modifications to the nudity feature at a suitable timestep without introducing additional distortions.}
    \vspace{-0.5cm}
    \label{fig:concept_removal}
\end{figure}

\begin{figure}[ht]
    \centering
    \includegraphics[width=0.9\linewidth]{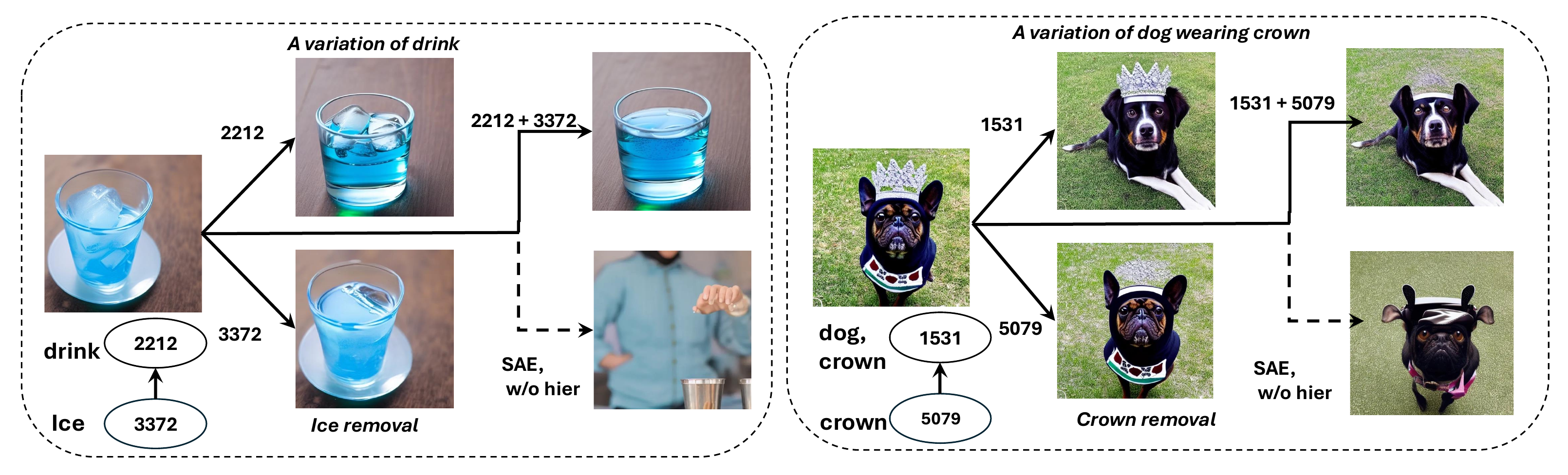}
    \caption{\textbf{Examples of multi-level editing} (best viewed with zoom).
High-level node 2212 contains all information about the cup, while mid-level node 3372 focuses primarily on the ice cubes. Similarly, high-level node 1531 encompasses all information about the dog (including the crown), and mid-level node 5079 is dedicated to the crown.
By modeling hierarchical relationships, we can perform edits that are often difficult to achieve with a single-layer edit. For instance, if we want to generate a variation of the cup while removing the ice cubes, we can apply feature steering on high-level node 2212 to create a new version of the cup, and simultaneously apply negative feature steering on mid-level node 3372 to remove the ice cubes.}
    \vspace{-0.5cm}
    \label{fig:multi-level}
\end{figure}

\subsection{Downstream Tasks} \label{subsec:downstream_exp}
Thanks to our theoretical framework, we can naturally perform a range of image generation and editing tasks, including model unlearning, controllable image generation, and multi-level editing.

\paragraph{Model unlearning.}
 We provide quantitative results of model unlearning on four benchmark datasets: IP2P~\citep{schramowski2023safe}, three splits of RING-A-BELL~\citep{tsai2023ring}, P4D~\citep{chin2023prompting4debugging}, and UnlearnDiffATK~\citep{zhang2024generate}. 
These benchmarks focus on removing nudity-related concepts, and we report the accuracy of a pretrained nudity detector. Our method achieves the best results across all benchmarks.
In addition, to assess whether our method preserves general text-to-image capability, we apply feature steering on normal prompts from MSCOCO~\citep{lin2014microsoft}. The 10K results, reflected in low FID and high CLIP scores, demonstrate that our method successfully identifies and removes nudity concepts without affecting unrelated concepts. We also provide results on style removal in the appendix (Table~\ref{tab:style_transfer}) and we achieve superior performance across different metrics and tasks.

\textbf{Controllable image generation.} 
We also evaluate controllable image generation on three editing tasks: adding tabby patterns to cat faces,
adding mountains to landscape images, and replacing rocks with textured tree stumps. As shown in Table~\ref{tab:controll_image_gen} and Fig.\ref{fig:panda_hier}, our method achieves superior results compared to both the standard text-guided model and SAE without hierarchical modeling.  

\textbf{Multi-level image editing.} 
A key advantage of the hierarchical concept graph is that it can combine nodes across different levels for fine-grained image editing. In Fig.~\ref{fig:multi-level}, to obtain a new drink without ice (while preserving the background), we can apply multi-level editing by steering features at both high-level node 2212 and mid-level node 3372 simultaneously. Without such hierarchical relationship modeling, conventional methods struggle to produce this combination, which can result in undesired changes such as the drink being replaced by a person or the dog’s background.


\section{Conclusion}

In this work, we present a theoretical framework using causal minimality for identifying latent concepts in hierarchical selection models. We prove that generative model representations can map to true latent variables. Empirically, applying these constraints enables extracting meaningful hierarchical concept graphs from leading models, enhancing interpretability and grounded control.
\textbf{Limitations:} Our identifiability results are derived under standard regularity/minimality assumptions. These conditions are consistent with the inductive biases we encourage and are supported by our empirical findings. Extending the guarantees to broader forms of misspecification remains an important direction for future work.

\clearpage


\bibliography{references}
\bibliographystyle{configurations/iclr2026_conference}

\appendix

\clearpage
\onecolumn
\appendix

\begin{center}
    \Large\textbf{Appendix}
\end{center}
\vspace{0.5cm}

\section{Related Work} \label{app:related_work}
\paragraph{Latent variable identification.}
Identifying latent variables is a cornerstone of representation learning. A significant body of work establishes identifiability for single-level latent variable models, often assuming the availability of auxiliary information like domain or class labels \citep{khemakhem2020variational,khemakhem2020icebeem,hyvarinen2016unsupervised,hyvarinen2019nonlinear,zhang2024causal}. Recently, research into language models has explored the linear representation hypothesis, yielding linear-subspace identifiability for latent variables \citep{reizinger2024cross,liu2025predict,marconato2024all,rajendran2024from,jiang2024on}. Another research direction \citep{brady2023provably,lachapelle2023additive,lachapelle2024nonparametric,xu2024sparsity,lachapelle2022synergies,lachapelle2022disentanglement,zheng2022identifiability,joshi2025identifiable} leverages sparsity for identification but overlooks the causal relationships among latent variables.
Distinct from these approaches, our work formulates the concept space using \emph{hierarchical} models that allow for the explicit modeling of intricate, multi-level conceptual interactions.
Our work also connects to the literature on causal abstraction, which studies how a high-level causal model can be faithfully derived from a low-level one \citep{rubenstein2017causal, geiger2021causal,geiger2024finding,beckers2019abstracting,beckers2021equivalent}. A key distinction is our focus on \emph{component-wise identifiability}, which guarantees that the discovered concepts are equivalent to the true latent variables, providing a stronger foundation for interpretability.
Our work is complementary to important research on weak vs. strong~\citep{xi2023indeterminacy} and approximate identifiability~\citep{buchholz2024robustness}. While much of this literature analyzes single-level models, our framework is the first to establish component-wise identifiability (Definition~\ref{def:componentwise}) for hierarchical selection models. This result fits within the weak identifiability category~~\citep{xi2023indeterminacy}, as do most results in this area. Critically, this level of identifiability is motivated by and sufficient for our downstream tasks, aligning with the principle of ``task-identifiability''~\citep{xi2023indeterminacy}. It provides the necessary guarantee for meaningful interpretation and control without requiring the stricter assumptions of strong identifiability. Moreover, the results on approximate identifiability~\citep{buchholz2024robustness} are encouraging, suggesting that robust representations can be learned even if our minimality conditions are only approximately met.

\section{Proofs} \label{app:proofs}

\subsection{Proof for Theorem~\ref{thm:vision_identification}}

\begin{lemma}[Base Case Visual Concept Identification]\label{lemma:base_case_vision_identification}
    Assume the following data-generating process:
    \begin{align}
        \chasub \sim \Pb{ \chasub | \auxvar}, \, \invsub \sim  \Pb{ \invsub} , \,
        \obs := g( \chasub, \invsub ).
    \end{align}
    We have the following conditions.
    \begin{enumerate}[label=\roman*,leftmargin=2em, topsep=0.5pt, partopsep=0pt, itemsep=-0.0em]
        \item \label{asmp:invertibility_single_level} \textbf{Informativeness}: The function $g(\cdot)$ is a diffeomorphism.
        \item \label{asmp:smooth_density_single_level} \textbf{Smooth Density}: The probability density function $ p( \chasubl, \invsubl | \auxvarl ) $ is smooth.
        \item \label{asmp:linear_independence_single_level} \textbf{Sufficient Variability}: At any value $ \chasubl $ of $ \chasub $, there exist $n( \chasub )+1$ distinct values of $ \auxvar $, denoted as $\{\auxvarl^{(n)}\}_{n=0}^{n( \chasub )}$, such that the vectors $\ww(\chasubl, \auxvarl^{n})-\ww(\chasubl, \auxvarl^{0})$ are linearly independent where $\ww( \chasubl, \auxvarl ) = \Big(
            \frac{\partial \log p \left(\chasubl | \auxvarl \right)}{\partial \chacompl_{1} }, \ldots, \frac{\partial \log p \left(\chasubl | \auxvarl \right)}{\partial \chacompl_{ n( \chasubl ) } }.
        \Big)$
    \end{enumerate}
    If a specification $ \bm\theta $ satisfies \ref{asmp:invertibility_single_level},\ref{asmp:smooth_density_single_level}, and \ref{asmp:linear_independence_single_level}, another specification $ \hat{\bm\theta} $ satisfies \ref{asmp:invertibility_single_level},\ref{asmp:smooth_density_single_level}, and they generate matching distribution $ \Pb{\obs} $, then we can verify that $ \chasub $ and $ \hat{\chasub} $ can be identified up to its subspace. 
\end{lemma}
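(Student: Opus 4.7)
The plan is to derive a functional equation between the two specifications, eliminate all terms independent of $\auxvar$ by differencing over auxiliary values, and then invoke the sufficient variability condition to force the relevant block of the reparameterization Jacobian to vanish. First, since both $g$ and $\hat{g}$ are diffeomorphisms by Condition~\ref{asmp:invertibility_single_level} and the two specifications induce the same $\Pb{\obs \mid \auxvar}$, the composition $h := \hat{g}^{-1} \circ g$ is a diffeomorphism from $(\chasubl, \invsubl)$ to $(\hat{\chasubl}, \hat{\invsubl})$. Using the change-of-variables formula together with the conditional independences $\invsub \perp \auxvar$ and $\hat{\invsub} \perp \auxvar$, I would obtain the pointwise identity
\begin{equation*}
\log p(\chasubl \mid \auxvarl) + \log p(\invsubl) \;=\; \log \hat{p}(\hat{\chasubl} \mid \auxvarl) + \log \hat{p}(\hat{\invsubl}) + \log \lvert \det J_h \rvert.
\end{equation*}

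Next, because $\log p(\invsubl)$, $\log \hat{p}(\hat{\invsubl})$, and $\log \lvert \det J_h \rvert$ carry no dependence on $\auxvar$, I would evaluate this identity at the $n(\chasub)+1$ values $\{\auxvarl^{(n)}\}_{n=0}^{n(\chasub)}$ supplied by Condition~\ref{asmp:linear_independence_single_level} and subtract the $n=0$ copy, cancelling every $\auxvar$-free term:
\begin{equation*}
\log p(\chasubl \mid \auxvarl^{(n)}) - \log p(\chasubl \mid \auxvarl^{(0)}) \;=\; \log \hat{p}(\hat{\chasubl} \mid \auxvarl^{(n)}) - \log \hat{p}(\hat{\chasubl} \mid \auxvarl^{(0)}).
\end{equation*}
Since Condition~\ref{asmp:linear_independence_single_level} constrains scores of the \emph{true} model, I would reverse roles through $h^{-1}$, treating $\chasubl$ as a function of $(\hat{\chasubl}, \hat{\invsubl})$, and differentiate both sides with respect to $\hat{\invcompl}_j$. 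The right-hand side vanishes (its arguments $\hat{\chasubl}$ and $\hat{\invsubl}$ are independent coordinates, and only $\hat{\chasubl}$ appears), while the chain rule on the left yields
\begin{equation*}
\sum_{i=1}^{n(\chasub)} \bigl[\ww(\chasubl, \auxvarl^{(n)}) - \ww(\chasubl, \auxvarl^{(0)})\bigr]_i \, \frac{\partial \chacompl_i}{\partial \hat{\invcompl}_j} \;=\; 0.
\end{equation*}

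Stacking these $n(\chasub)$ equations (one per $n$) produces a linear system whose coefficient matrix has the vectors $\ww(\chasubl, \auxvarl^{(n)}) - \ww(\chasubl, \auxvarl^{(0)})$ as rows; by Condition~\ref{asmp:linear_independence_single_level} it has full rank $n(\chasub)$, so $\partial \chacompl_i / \partial \hat{\invcompl}_j = 0$ for every $i, j$. Hence $\chasub$ is a function of $\hat{\chasub}$ alone, and because $h$ is a diffeomorphism, the symmetric statement $\hat{\chasub} = \phi(\chasub)$ follows by counting dimensions, which together establish subspace-level identifiability. The step I expect to be the main obstacle is this role-swapping in the differentiation: I must justify that $\hat{\chasubl}$ and $\hat{\invsubl}$ can be varied as independent coordinates on the target of $h$ (true by the diffeomorphism property) and that every chain-rule manipulation is legal, which ultimately rests on the smoothness granted by Condition~\ref{asmp:smooth_density_single_level} propagating through $h$.
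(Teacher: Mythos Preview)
Your argument is essentially the paper's: change of variables via the diffeomorphism, log-identity, differencing over auxiliary values to kill the $\auxvar$-free terms, differentiation in the $\hat{\invsub}$ direction, and the linear-independence condition to force $\partial \chacompl_i / \partial \hat{\invcompl}_j = 0$. The only substantive difference is notational---your $h$ is the paper's $h^{-1}$.

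The one place where your proposal is genuinely incomplete is the last step. From $\partial \chacompl_i / \partial \hat{\invcompl}_j = 0$ you correctly get that $\chasub$ is a function of $\hat{\chasub}$ alone, and the block-triangular Jacobian being full rank forces $n(\chasub) \le n(\hat{\chasub})$. But ``counting dimensions'' by itself does not give you the reverse inequality, and hence not the symmetric statement $\hat{\chasub} = \phi(\chasub)$: a block-lower-triangular invertible matrix with a strictly wide top-left block is perfectly possible. The paper closes this gap by invoking the external sparsity constraint $n(\hat{\chasub}) \le n(\chasub)$ borrowed from the surrounding theorem (not listed among the lemma's own hypotheses), which forces equality; only then does the block-triangular structure of $\mJ_h$ transfer to $\mJ_{h^{-1}}$ and yield the invertible map $\hat{\chasub} \leftrightarrow \chasub$. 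You should make that dependence explicit rather than appeal to dimension counting.
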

\begin{proof}
    Since we have matched distributions, it follows that:
    \begin{align}
        p( \obsl | \auxvarl ) = \hat{p} ( \obsl | \auxvarl ).
    \end{align}
    As the generating function $ g $ has a smooth inverse (\ref{asmp:invertibility_single_level}), we can derive:
    \begin{align*}
        p( g( \chasubl, \invsubl ) | \auxvarl ) &= p ( \hat{g}( \hat{\chasubl}, \hat{\invsubl} ) | \auxvarl ) \implies \\
        p( \chasubl, \invsubl | \auxvarl ) \abs{ \mJ_{g^{-1}} } &= \hat{p} ( g^{-1} \circ \hat{g}( \hat{\chasubl}, \hat{\invsubl}) | \auxvarl ) \abs{ \mJ_{g^{-1}} }.
    \end{align*}
    Notice that the Jacobian determinant $ \abs{ \mJ_{g^{-1}} } > 0 $ because of $g(\cdot)$'s invertibility and let $ h := g^{-1} \circ \hat{g}: (\hat{\chasubl}, \hat{\invsubl} ) \mapsto ( \chasubl, \invsubl ) $ which is smooth and has a smooth inverse thanks to those properties of $g $ and $ \hat{g} $.
    It follows that
    \begin{align*}
        p( \chasubl, \invsubl | \auxvarl ) &= \hat{p} ( h ( \hat{\chasubl}, \hat{\invsubl}) | \auxvarl ) \implies \\
        p( \chasubl, \invsubl | \auxvarl ) &= \hat{p} ( \hat{\chasubl}, \hat{\invsubl}| \auxvarl ) \abs{ \mJ_{ h^{-1} } }.
    \end{align*}
    The independence relation in the generating process implies that
    \begin{align} \label{eq:log_equality}
        \log p( \chasubl | \auxvarl ) + \sum_{ i \in [ n( \invsubl) ] } \log p(\invcomp_{i} ) & = \log \hat{p} ( \hat{\chasubl} | \auxvarl ) + \sum_{i \in [n(\hat{\invsubl})]} \log \hat{p} ( \hat{\invcomp}_{i} ) + \log \abs{ \mJ_{ h^{-1} } }.
    \end{align}
    For any realization $ \auxvarl^{0} $, we subtract \eqref{eq:log_equality} at any $ \auxvarl \neq \auxvarl^{0} $ with that at $ \auxvarl^{0} $:
    \begin{align} \label{eq:changing_log_equality}
        \log p( \chasubl | \auxvarl ) - \log p( \chasubl | \auxvarl^{0} ) & = \log \hat{p} ( \hat{\chasubl} | \auxvarl ) - \log \hat{p} ( \hat{\chasubl} | \auxvarl^{0} ).
    \end{align}
    Taking derivative w.r.t. $ \hat{\invcompl}_{j} $ for $ j \in [n(\hat{\invsubl})] $ yields:
    \begin{align} \label{eq:single_equation}
        \sum_{i \in [n(\chasubl)]} \frac{ \partial }{ \partial \chacompl_{i} } ( \log p( \chasubl | \auxvarl ) - \log p( \chasubl | \auxvarl^{0} ) ) \cdot \frac{ \partial \chacompl_{i} }{ \partial \hat{\invcompl}_{j} } = 0.
    \end{align}
    The left-hand side zeros out because $ \hat{\chasubl} $ is not a function of $\hat{\invsubl} $.

    Condition~\ref{asmp:linear_independence_single_level} ensures the existence of at least $ n(\chasubl) $ such equations with $ \auxvarl^{1}, \dots, \auxvarl^{n(\chasubl)} $ that are linearly independent, constituting a full-rank linear system.
    Since the choice of $j \in [\invsubl]$ is arbitrary. It follows that   
    \begin{align} \label{eq:zero_subspace}
        \frac{ \partial \chacompl_{i} }{ \partial \hat{\invcompl}_{j} } = 0,  \forall i \in [n(\chasubl)], j \in [n(\invsubl)].
    \end{align}

    Therefore, the Jacobian matrix $ \mJ_{ h }  $ is of the following structure:
    \begin{equation}
    \scalebox{1.25}{$
        \mathbf{J}_{h} =
        \begin{bmatrix}
        \frac{\partial \invsubl }{\partial \hat{\invsubl} } \;  &
        \frac{\partial \invsubl }{\partial \hat{\chasubl}  } \\[.5em]
        \frac{\partial \chasubl}{\partial \hat{\invsubl} } \;  &
        \frac{\partial \chasubl }{\partial \hat{\chasubl} }.
        \end{bmatrix}
    $}
    \end{equation}
    \eqref{eq:zero_subspace} suggests that the block $ \frac{\partial \chasubl}{\partial \hat{\invsubl} } = 0 $.
    Since $ \mJ_{ h } $ is full-rank, we can deduce that $ \frac{\partial \chasubl }{\partial \hat{\chasubl} } $ must have full row-rank and $ n( \chasubl ) \leq n ( \hat{\chasubl} ) $.
    The sparsity constraint in \eqref{eq:sparsity_constraint} further implies that $ n( \chasubl ) = n ( \hat{\chasubl} ) $. That is, we can correctly identify the dimensionality of the changing subspace $ \chasubl $.
    Moreover, since $ \mJ_{h} $ is full-rank and the block $ \frac{\partial \chasubl}{\partial \hat{\invsubl} } $ is zero, we can derive that the corresponding block $ \frac{\partial \hat{\chasubl}}{\partial \invsubl } $ in its inverse matrix $ \mJ_{h^{-1}} $ is also zero.
    Therefore, there exists an invertible map $ \hat{\chasubl} \mapsto \chasubl $, which concludes the proof.
\end{proof}

\begin{lemma}[Determining Intersection Cardinality from Union Cardinalities] \label{lemma:inverse_pie}
Let $\mathcal{A} = \{A_1, A_2, \ldots, A_n\}$ be a finite collection of finite sets. If for any non-empty subset of indices $K \subseteq \{1, 2, \ldots, n\}$, the cardinality of the union $\card{\union_{k \in K} A_k}$ is known, then for any non-empty subset of indices $S \subseteq \{1, 2, \ldots, n\}$, the cardinality of the intersection $\card{\intersection_{s \in S} A_s}$ can be determined.
\end{lemma}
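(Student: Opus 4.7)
The strategy is to invert the role of unions and intersections in the classical inclusion--exclusion principle: while that identity traditionally expresses union cardinalities as signed sums over intersection cardinalities, here the unions are given and the intersections must be recovered. My plan is to proceed by induction on $|S|$, using standard inclusion--exclusion as a recurrence that ties the intersection over $S$ to intersections of strictly smaller size together with a single union cardinality taken from the hypothesized data.

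For the base case $|S| = 1$, we have $\card{\intersection_{s \in S} A_s} = \card{A_s} = \card{\union_{s \in S} A_s}$, which is supplied directly. For the inductive step, I start from
\[
\card{\union_{s \in S} A_s} = \sum_{\emptyset \neq T \subseteq S} (-1)^{|T|+1}\,\card{\intersection_{t \in T} A_t},
\]
isolate the top-order term $T = S$, and solve:
\[
\card{\intersection_{s \in S} A_s} = (-1)^{|S|+1}\!\left(\card{\union_{s \in S} A_s} - \sum_{\emptyset \neq T \subsetneq S} (-1)^{|T|+1}\,\card{\intersection_{t \in T} A_t}\right).
\]
The union cardinality on the right is provided by the hypothesis of the lemma, and each smaller intersection cardinality is already available from a previous stage of the induction, so the right-hand side is fully determined and we obtain $\card{\intersection_{s \in S} A_s}$.

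This recursion in fact telescopes into the closed-form dual identity
\[
\card{\intersection_{s \in S} A_s} = \sum_{\emptyset \neq T \subseteq S} (-1)^{|T|+1}\,\card{\union_{t \in T} A_t},
\]
which one could verify independently by an indicator-function argument: expand $\mathbf{1}[x \in \union_{t \in T} A_t] = 1 - \prod_{t \in T}(1 - \mathbf{1}_{A_t}(x))$, substitute into the right-hand side, swap the order of summation, and collapse via the standard vanishing identity $\sum_{T' \subseteq U}(-1)^{|T'|} = \mathbf{1}[U = \emptyset]$. I do not anticipate a genuine obstacle: the derivation is essentially Möbius inversion on the Boolean lattice between two complementary generating expansions, and the only nontrivial work is careful sign bookkeeping, which is handled uniformly by that one vanishing identity.
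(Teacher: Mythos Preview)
Your proof is correct and follows essentially the same approach as the paper: induction on $|S|$ using inclusion--exclusion to isolate the full intersection term and solve for it from the union plus strictly smaller intersections. Your additional closed-form dual identity and indicator-function sketch are a nice bonus but go beyond what the paper does.
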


\begin{proof}
We proceed by induction on the size of the set of indices $S$, denoted by $|S|$, for which we want to determine the intersection cardinality.

\textbf{Base Case:} $|S| = 1$.
Let $S = \{i\}$ for some $i \in \{1, 2, \ldots, n\}$. We aim to determine the cardinality $\card{\intersection_{s \in S} A_s} = \card{A_i}$.
The union of a single set $A_i$ is simply $A_i$ itself. That is, $A_i = \union_{k \in \{i\}} A_k$.
By the premise of the theorem, the cardinality $\card{\union_{k \in \{i\}} A_k}$ is known.
Therefore, $\card{A_i}$ is known. The base case holds.

\textbf{Inductive Hypothesis:}
Assume that for some integer $m \ge 1$, the cardinality of any intersection of $j$ sets, $\card{\intersection_{j \in J} A_j}$, can be determined from the known union cardinalities for all non-empty index sets $J$ such that $1 \le |J| \le m$.

\textbf{Inductive Step:}
We want to show that the cardinality of any intersection of $m+1$ sets can be determined. Let $S_{m+1}$ be an arbitrary non-empty subset of indices from $\{1, 2, \ldots, n\}$ such that $|S_{m+1}| = m+1$. Our goal is to determine $\card{\intersection_{s \in S_{m+1}} A_s}$.

Consider the Principle of Inclusion-Exclusion (PIE) applied to the union of the sets whose indices are in $S_{m+1}$:
$$ \card{ \union_{s \in S_{m+1}} A_s } = \sum_{\emptyset \neq K \subseteq S_{m+1}} (-1)^{|K|-1} \card{ \intersection_{k \in K} A_k } $$
This sum runs over all non-empty subsets $K$ of $S_{m+1}$. We can separate the term where $K = S_{m+1}$ (which corresponds to the intersection of all $m+1$ sets) from the other terms in the sum:
$$ \card{ \union_{s \in S_{m+1}} A_s } = \left( \sum_{\emptyset \neq K \subset S_{m+1}} (-1)^{|K|-1} \card{ \intersection_{k \in K} A_k } \right) + (-1)^{|S_{m+1}|-1} \card{ \intersection_{s \in S_{m+1}} A_s } $$
Here, the sum is now over all non-empty \emph{proper} subsets $K$ of $S_{m+1}$.
We can rearrange this equation to solve for the term $\card{ \intersection_{s \in S_{m+1}} A_s }$:
$$ (-1)^{|S_{m+1}|-1} \card{ \intersection_{s \in S_{m+1}} A_s } = \card{ \union_{s \in S_{m+1}} A_s } - \sum_{\emptyset \neq K \subset S_{m+1}} (-1)^{|K|-1} \card{ \intersection_{k \in K} A_k } $$
Multiplying both sides by $(-1)^{|S_{m+1}|-1}$ (noting that $((-1)^{|S_{m+1}|-1})^2 = 1$):
$$ \card{ \intersection_{s \in S_{m+1}} A_s } = (-1)^{|S_{m+1}|-1} \left( \card{ \union_{s \in S_{m+1}} A_s } - \sum_{\emptyset \neq K \subset S_{m+1}} (-1)^{|K|-1} \card{ \intersection_{k \in K} A_k } \right) $$
Let us analyze the terms on the right-hand side of this equation:
\begin{enumerate}
    \item The factor $(-1)^{|S_{m+1}|-1}$ is a known sign, since $|S_{m+1}| = m+1$.
    \item The term $\card{ \union_{s \in S_{m+1}} A_s }$ is the cardinality of a union of $m+1$ sets. Since $S_{m+1}$ is a non-empty subset of indices, this value is known by the premise of the theorem.
    \item Consider the sum $\sum_{\emptyset \neq K \subset S_{m+1}} (-1)^{|K|-1} \card{ \intersection_{k \in K} A_k }$. Each $K$ in this summation is a non-empty proper subset of $S_{m+1}$. Therefore, the size of each such $K$ satisfies $1 \le |K| \le m$.
    By the Inductive Hypothesis, for any such $K$ (i.e., for any intersection of $j$ sets where $1 \le j \le m$), the cardinality $\card{ \intersection_{k \in K} A_k }$ can be determined from the known union cardinalities.
    Consequently, every term in this summation, including its sign factor $(-1)^{|K|-1}$, is determinable.
\end{enumerate}
Since all components on the right-hand side of the equation are known or can be determined based on the theorem's premise and the inductive hypothesis, the value of $\card{ \intersection_{s \in S_{m+1}} A_s }$ can be determined.

In conclusion, by the principle of mathematical induction, for any non-empty subset of indices $S \subseteq \{1, 2, \ldots, n\}$, the cardinality of the intersection $\card{ \intersection_{s \in S} A_s }$ can be determined if the cardinality of any union $\card{ \union_{k \in K} A_k }$ (for any non-empty $K \subseteq \{1, 2, \ldots, n\}$) is known.
\end{proof}

\begin{lemma}[Intersection Block Identification~\citep{Kong2023understanding}] \label{lemma:intersection_block_identification}
    We assume the following data-generating process:
    \begin{align} 
        [\vv_{1}, \vv_{2}] &= g( \cc, \ss_{1}, \ss_{2} ), \\
        \vv_{1} & = g_{1} (\cc, \ss_{1}), \\
        \vv_{2} & = g_{2} (\cc, \ss_{2}),
    \end{align}
    where $ \cc \in \cC \subset \R^{d_{c}} $, $ \ss_{1} \in \cS \subset \R^{d_{s_{1}}} $, and $ \ss_{2} \in \cS_{2} \subset \R^{d_{s_{2}}}$. 
    Both $g_{1}$ and $g_{2}$ are smooth and have non-singular Jacobian matrices almost everywhere, and $g$ is invertible.
    If $ \hat{g}_{1}: \cZ \to \cV_{1} $ and $ \hat{g}_{2}: \cZ \to \cV_{2} $ assume the generating process of the true model $(g_{1}, g_{2})$ and match the joint distribution $p_{\vv_{1}, \vv_{2}}$, 
    then there is a one-to-one mapping between the estimate $\hat{\cc}$ and the ground truth $\cc$ over $ \cC \times \cS \times \cS $, that is, $\cc$ is block-identifiable.
\end{lemma}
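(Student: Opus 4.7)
The plan is to exploit the asymmetric dependence structure of the two views on the shared content $\cc$ versus the view-specific variables $\ss_1, \ss_2$. First I would use the matched joint distribution $p_{\vv_1, \vv_2}$, the invertibility of $g$, and the assumed structural form of $\hat{g}$ to obtain a smooth, almost-everywhere bijective reparameterization $h$ with $(\cc, \ss_1, \ss_2) = h(\hat{\cc}, \hat{\ss}_1, \hat{\ss}_2)$ on their common support. The existence of $h$ is a standard change-of-variables argument: since the pushforwards of $g$ and $\hat{g}$ to $(\vv_1, \vv_2)$ agree, their images coincide, and on this image both $g^{-1}$ and $\hat{g}^{-1}$ are well-defined, giving $h := g^{-1} \circ \hat{g}$.

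Next I would derive the key functional constraints from the view-wise structure. Fix $(\hat{\cc}, \hat{\ss}_1)$ and perturb $\hat{\ss}_2$: the estimated $\hat{\vv}_1 = \hat{g}_1(\hat{\cc}, \hat{\ss}_1)$ is unchanged, so the corresponding $\vv_1 = g_1(\cc, \ss_1)$ is also unchanged. Because $g_1$ has a non-singular Jacobian almost everywhere (in particular, $D g_1$ has trivial kernel on its domain), the pair $(\cc, \ss_1)$ must stay constant as $\hat{\ss}_2$ varies. Translating this into partial derivatives of $h$ yields $\partial \cc / \partial \hat{\ss}_2 = 0$ and $\partial \ss_1 / \partial \hat{\ss}_2 = 0$ on an open dense set. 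A symmetric argument using view 2 gives $\partial \cc / \partial \hat{\ss}_1 = 0$ and $\partial \ss_2 / \partial \hat{\ss}_1 = 0$. Combining these, $\cc = h_c(\hat{\cc})$ for some smooth $h_c$ depending only on $\hat{\cc}$.

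Finally I would upgrade this to a genuine one-to-one correspondence via a Jacobian argument. The Jacobian $\mJ_h$ has a block structure in which the blocks linking $\cc$ to $\hat{\ss}_1, \hat{\ss}_2$ vanish by the previous step, as do the blocks between $\ss_j$ and $\hat{\ss}_k$ for $j \neq k$. The invertibility of $\mJ_h$ then forces the diagonal block $\partial \cc / \partial \hat{\cc}$ to be square and non-singular, yielding $n(\cc) = n(\hat{\cc})$ and making $h_c$ a local diffeomorphism. Global injectivity of $h_c$ follows from the global injectivity of $h$: if $h_c(\hat{\cc}) = h_c(\hat{\cc}')$, matching $\hat{\ss}_1, \hat{\ss}_2$ pre-images on each side via the non-degenerate diagonal $\ss$-blocks produces full tuples with equal images under $h$, forcing $\hat{\cc} = \hat{\cc}'$. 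This is exactly the block identifiability claim.

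The main obstacle I anticipate is making the ``vary one coordinate at a time'' argument globally valid. The partial-derivative identities are derived on an open dense set where the Jacobians of $g_1, g_2$, and $h$ are all non-singular, but propagating them to the full supports $\cC \times \cS_1 \times \cS_2$ and extending the local diffeomorphism $h_c$ to a global bijection requires care. In particular, one must argue that the support of $(\hat{\cc}, \hat{\ss}_1, \hat{\ss}_2)$ is path-connected on a dense open set, so that the local constancy of $\cc$ along $\hat{\ss}_1, \hat{\ss}_2$ directions promotes to the global factorization $\cc = h_c(\hat{\cc})$; this is the delicate step and typically relies on the regularity of the supports implicit in the smoothness and non-singular-Jacobian assumptions.
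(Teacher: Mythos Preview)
The paper does not supply its own proof of this lemma; it is imported verbatim from the cited reference and then used as a black box in the proof of the one-level visual identification lemma. Your proposal is correct and follows the standard route for multi-view content identification used in that reference and in related work: form the reparameterization $h := g^{-1}\circ\hat g$, use that each view $\vv_j$ is unchanged under perturbations of the foreign style variable $\hat{\ss}_{k}$ ($k\neq j$) to force $\partial(\cc,\ss_j)/\partial\hat{\ss}_k = 0$ via the trivial kernel of $Dg_j$, and then read off the invertibility of $\partial\cc/\partial\hat{\cc}$ from the resulting block structure of $\mJ_h$. The caveat you flag---promoting the almost-everywhere vanishing to a global factorization $\cc = h_c(\hat{\cc})$ on the full support---is precisely the delicate step in this literature; the cited source handles it under the stated smoothness and non-singular-Jacobian hypotheses together with connected open supports, so your concern is well placed but not a gap in the argument.
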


\begin{lemma}[One-level Visual Concept Identification] \label{lemma:one_level_vision_identification}
    Assume the process for visual concepts in \eqref{eq:vision_data_generating_process} with $\Lv =1$. 
    If a model specification $\bm\theta_{\mV}$ satisfies Condition~\ref{cond:vision_identification}, and an alternative specification $\hat{\bm\theta}_{\mV}$ satisfies Conditions~\ref{cond:vision_identification}-\ref{asmp:invertibility} and \ref{cond:vision_identification}-\ref{asmp:smooth_density}, along with a sparsity constraint such that for corresponding $\hat{\late}$ and $\late$:
\begin{align} \label{eq:sparsity_constraint_single_level}
    n( \parents{ \hat{\late} } ) \leq n( \parents{ \late } ),
\end{align}
then, if both models $\bm\theta_{\mV}$ and $\hat{\bm\theta}_{\mV}$ generate the same observed data distribution $\Pb{\obs}$, the latent visual concepts $\lat_{1}$ are component-wise identifiable for every level.
\end{lemma}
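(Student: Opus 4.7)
The plan is to climb from the subspace-level identifiability supplied by Lemma~\ref{lemma:base_case_vision_identification} to the component-level identifiability of Definition~\ref{def:componentwise} by using the text variables $\dis$ as a family of auxiliary variables and then intersecting the resulting subspaces under the sparsity (minimality) condition. The single hierarchical level $\lat_1$ is sandwiched between the observed auxiliary $\dis$ above and the observed image $\obs$ below, so the whole argument reduces to a nonlinear ICA-style block identification iterated over subsets of $\dis$.

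First, for each text coordinate $\disc_i \in \dis$ I would hold $\dis \setminus \{\disc_i\}$ fixed at an arbitrary value and invoke Lemma~\ref{lemma:base_case_vision_identification} with $\auxvar = \disc_i$, $\chasub = \children{\disc_i} \subseteq \lat_1$, and $\invsub = \lat_1 \setminus \children{\disc_i}$. Condition~\ref{cond:vision_identification}-\ref{asmp:invertibility} supplies the diffeomorphism from $(\chasub,\invsub)$ to $\obs$, Condition~\ref{cond:vision_identification}-\ref{asmp:smooth_density} supplies the smooth conditional density, and the restriction of Condition~\ref{cond:vision_identification}-\ref{asmp:linear_independence} to the parent set $\{\disc_i\}$ supplies the $n(\children{\disc_i})+1$ linearly independent score differences required by the lemma. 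The per-latent sparsity bound \eqref{eq:sparsity_constraint_single_level} forces $n(\hat{\children{\disc_i}}) \le n(\children{\disc_i})$ after summing over the estimated latents that take $\disc_i$ as a parent, which closes the dimensional bookkeeping needed for Lemma~\ref{lemma:base_case_vision_identification} and delivers a block/subspace identification of $\children{\disc_i}$ for every $i$. Next, treating the identified $\children{\disc_i}$'s as a collection of ``views'' of $\lat_1$, I would apply Lemma~\ref{lemma:intersection_block_identification} repeatedly (pair-wise and then growing the index set by induction) to obtain block identification of every intersection $\bigcap_{i \in I} \children{\disc_i}$, and then use Lemma~\ref{lemma:inverse_pie} to pin down the cardinality of every atom $A_P := \{ \late \in \lat_1 : \parents{\late} = P \}$ purely from the cardinalities of the identified unions.

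The final step is to turn these atom-level blocks into singletons, which is where Condition~\ref{cond:vision_identification}-\ref{asmp:sparsity} (causal minimality) enters. For each true $\tilde{\late}$, the condition guarantees a subset of its children whose only common parent is $\tilde{\late}$; applied in the single-level setting and dualized, this forces the atoms $A_P$ indexed by distinct parent sets $P$ to be resolvable as differences of the already-identified intersections, and together with the bound \eqref{eq:sparsity_constraint_single_level} prevents any learned $\hat{\late}$ from inheriting a strictly larger parent fingerprint than its true counterpart. Composing the resulting block bijections yields a permutation $\pi$ together with invertible per-component maps $h_i$ as in Definition~\ref{def:componentwise}. The main obstacle I anticipate is the clean translation of the structural, graph-level minimality in Condition~\ref{cond:vision_identification}-\ref{asmp:sparsity} into a statement about the Jacobian block structure between $\lat_1$ and $\hat{\lat}_1$: one has to rule out ``leakage'' where an estimated latent secretly mixes true latents from different atoms, and has to show that the per-variable sparsity bound in \eqref{eq:sparsity_constraint_single_level} forces the estimated parent sets to \emph{match}, not merely be contained in, the true ones. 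This matching step is where the bulk of the technical care will concentrate.
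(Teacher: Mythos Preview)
Your proposal is correct and assembles the same three auxiliary lemmas as the paper. The only notable difference is the order of operations: the paper first runs Lemma~\ref{lemma:base_case_vision_identification} on arbitrary \emph{subsets} $\{\disc_i\}_{i\in\cH}$ to read off the dimensionality of $\bigcup_{i\in\cH}\parents{\disc_i}$, feeds those union counts into Lemma~\ref{lemma:inverse_pie} to recover all intersection counts (this is its Step~1, which pins down the bipartite graph $\lat_1\!\leftrightarrow\!\dis$), and only afterwards invokes Lemma~\ref{lemma:intersection_block_identification} to upgrade counts to actual block identifications (Step~2). Your inverted route---per-coordinate auxiliaries first, then iterated Lemma~\ref{lemma:intersection_block_identification}, then Lemma~\ref{lemma:inverse_pie} for atom sizes---reaches the same endpoint, but your final call to Lemma~\ref{lemma:inverse_pie} is in fact superfluous: once every intersection block $\bigcap_{i\in I}\parents{\disc_i}$ is identified, Condition~\ref{cond:vision_identification}-\ref{asmp:sparsity} already exhibits each $\late_{1,j}$ as one such intersection, so the argument terminates there (exactly as the paper's does) and your closing worries about Jacobian leakage and parent-set matching never need to be confronted.
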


\begin{proof}
    For notational convenience, we denote $ \lat_{1} $ as $ \sel $ and $ \dis $ as $ \auxvar $ in this proof. 
    This proof consists of two steps.
    In step one, we identify the connectivity between $\auxcomp$ and $\sele$ variables.
    In step two, we further show the identifiability of the blocks resulting from intersecting the parent sets $ \parents{\auxcomp} $ of multiple $ \auxcomp $ variables.

    \paragraph{Step 1: connectivity identification.}
    Since we have access to the joint distribution $ \Pb{\sel, \auxvar} $, we can derive conditional distributions $ \Pb{ \sel | \{\auxcomp_{i}\}_{i\in \cH} } $ for any index subset $\cH \subseteq [n( \auxvar) ]$.
    By Lemma~\ref{lemma:base_case_vision_identification}, we can identify the dimensionality of the set of variables $ \sel $ that are connected to \emph{any} variable in $ \{\auxcomp_{i}\}_{i\in \cH} $ for any $\cH \subseteq [n( \auxvar) ]$.
    Lemma~\ref{lemma:inverse_pie} implies that we can identify the dimensionality of the set of variables $ \sel $ that are connected to \emph{all} variables in $ \{\auxcomp_{i}\}_{i\in \cH} $ for any $\cH \subseteq [n( \auxvar) ]$.  
    This information gives rise to a partition of $\sele$ components, in which each part is connected to the same set of $ \auxcomp $ variables.
    Therefore, we have identified the bipartite graph between $ \sel $ and $ \auxvar $ up to a permutation.
    
    \paragraph{Step 2: intersection block identification.}
    Denote the indices of $ \sele $ variables that are connected to $ \auxcomp_{i} $ as $ \cI(i) \subseteq [ n(\sel) ]$.
    We denote the block of $ \sele $ components connected to \emph{all} variables in $ \{ \auxcomp_{i}\}_{i \in \cH} $ as $ \sel_{ \cap_{ i \in \cH} \cI(i) } $ for any $\cH \subseteq [n( \auxvar) ]$.
    Thanks to Lemma~\ref{lemma:base_case_vision_identification}, we can identify the block $\sel_{\cI(i)}$ connected to the variable $ \auxcomp_{i} $ for any $i \in [n(\auxvar)]$.
    Lemma~\ref{lemma:intersection_block_identification} allows us to identify the intersection of any two blocks $ \sel_{ \cI(i) \cap \cI(j) } $ for $ i \neq j $.
    Therefore, repeated applications of Lemma~\ref{lemma:intersection_block_identification} leads to the identification of the intersection block $\sel_{ \cap_{ i \in \cH} \cI(i) }$ for any $\cH \subseteq [n( \auxvar) ]$.
    This concludes the proof.
\end{proof}

\visionidentificationconditions*
\visionidentification*

\begin{proof}
    By Lemma~\ref{lemma:one_level_vision_identification}, we can identify the set of variables $ \lat_{1} $ that are directly connected to the text variables $ \dis $ and their causal graph.
    Treating the identified $ \lat_{1} $ as the $ \auxvar $ in Lemma~\ref{lemma:one_level_vision_identification}, we can further identify $ \lat_{2} $.
    Repeating this procedure yields the identifiability of the entire model. 
\end{proof}

\section{Key Concept Discussions} \label{app:dicussions}

\paragraph{The roles and purposes of ``Selection-based hierarchy and causality minimality''.}
The selection-based hierarchy and causal minimality are constraints on the natural data distribution (images or text), which is a standard modeling practice in causal representation learning~\citep{scholkopf2021toward}. Specifically, the selection-based hierarchy considers concepts as effects of their constituent parts~\citep{zheng2024detecting}, while causal minimality assumes this underlying causal graph is sparse in a specific way (e.g., Condition~\ref{cond:vision_identification}-\ref{asmp:sparsity}).

\paragraph{``Innate'' hierarchical concept graphs.}
``Innate'' refers to the causal structure inherent in the natural data-generating process itself. Latent concepts in the real world interact (e.g., `eyes' and `nose' are components of a `face'), forming a pre-existing causal structure which we refer to as the "innate concept graph."

\paragraph{True latent variables and their verifications.}
``True latent variables'' follow the standard notion in causal representation learning~\citep{scholkopf2021toward}: they are the disentangled, interpretable, semantic factors of the real-world data-generating process (e.g., age, object pose). This is in contrast to a deep learning model's learned features, which are often an entangled, uninterpretable mixture optimized for a specific training objective. Aligning learned features with true latent variables (referred to as ``identification'') is the central goal, as it enables reliable interpretation (e.g., ``this feature is age'') and precise control (e.g., ``increase this feature to make the face older''). This is a fundamental question that our work addresses through both theoretical guarantees and empirical validation. Our work provides the guarantee that if the data-generating process fulfills the property of causal minimality and our learning objective enforces this (e.g., via sparsity), the model's learned features are provably equivalent to the true latent variables. We then validate this empirically via intervention, a standard practice in causal research~\citep{scholkopf2021toward}. Our experiments (Figure~\ref{fig:vision_causal}) show that manipulating the theoretically identified features provides semantic control over the generated output, providing evidence that these features are the meaningful causal levers of the generative process.

\paragraph{Validity of the conditions.}
While assumptions on the unobserved data-generating process may not be validated directly, we have reasoned for the plausibility of our conditions by reflecting on natural properties of real-world data. Beyond standard regularity assumptions like smoothness and variability~\citep{khemakhem2020variational,khemakhem2020icebeem,hyvarinen2016unsupervised,hyvarinen2019nonlinear,zhang2024causal}, our key minimality condition---Sparse Connectivity (Condition~\ref{cond:vision_identification}-\ref{asmp:sparsity})---is motivated by the observation that concepts typically arise from a sparse set of causes~\citep{lachapelle2024nonparametric,xu2024sparsity,lachapelle2022disentanglement,zheng2022identifiability,moran2021identifiable}. Perhaps a more convincing validation is the empirical results. Our experiments provide strong indicative support for this assumption: by actively enforcing sparsity via SAEs, we successfully extract a meaningful concept hierarchy (Figure~\ref{fig:vision_causal}) that is otherwise dense and not easily interpretable. This success provides support for the usefulness of our overall approach and the validity of our assumptions. We acknowledge that these assumptions, like any in this field, may not hold universally. Fortunately, our strong empirical results suggest they seem effective and plausible for the complex, real-world data we study.

\paragraph{Concept variable interpretation.}
Our theory proves the existence of a clean, one-to-one mapping between a learned feature and a true latent variable. This guarantee is what makes a principled interpretation possible in the first place. The subsequent step—assigning a human-understandable description to this now-identified concept—is intrinsically a task that requires human validation. This is a fundamental aspect of all interpretability research (perhaps modern vision-language models have the potential to automate this process).

\paragraph{Comparison with recent work~\citep{cywinski2025saeuron}.}
On the technique side, \citet{cywinski2025saeuron} feature an elegant concept location technique by utilizing the score function, which could significantly benefit our algorithm. For example, we could employ SAeUron~\citep{cywinski2025saeuron} to confirm whether our features at various timesteps match the concept location it identifies.
Our causal learning algorithm explicitly learns the inter-connectivity among concepts across hierarchical levels. Thus, to modify a part of a high-level concept, we could focus our scope on only the variables connected to this specific high-level concept, which lowers the search complexity. In our experiment example, to implement two changes, ``replacing the rock with tree stump'' and ``adding texture to tree stump'', SAeUron may need to perform two independent searches across all timesteps and node indices. Our method can help reduce the search space to only the low-level nodes connected to ``tree stump''. In addition, pinpointing specific diffusion timesteps to intervene on potentially aids in managing undesirable artifacts. Moreover, our explicit concept graph could also give an interpretable, intuitive characterization of the model’s knowledge.
On the message side, \citet{cywinski2025saeuron} propose a novel score function to select the timestep and node index for accurate concept unlearning.
Our work’s focus is to provide concise and informative theoretical conditions to understand concept learning in deep generative models, with potential applications like concept easing or controllable generation. With this work, we hope the theoretical insights will facilitate the development of refined and dedicated methods in the community.

\paragraph{Comparison with recent work~\citep{kim2024textit}.}
Revelio~\citep{kim2024textit} relies on training a classifier on a specific classification dataset. Revelio trains SAEs and a classifier on a specific dataset (e.g., Caltech-101) to evaluate which features and timesteps are most correlated with class labels.
Our work, in contrast, does not involve class labels. 
Our primary contribution is a hierarchical, causal framework designed to interpret the generative process itself. We apply causal discovery algorithms to discover the causal relationships across different levels of concepts without any class labels. We are able to understand how semantic concepts causally relate to one another across different levels of abstraction to form a coherent output (e.g., how ``ear'' and ``mouth'' features causally contribute to a ``cat face'').
Moreover, \citet{kim2024textit} do not perform interventions or analyze the compositional structure of generation, which are the central themes of our paper.

\section{Implementation Details} \label{app:implementation_details}
We present the diagram of our method in Fig.\ref{fig:method_diagram}.

\paragraph{Annotation of the concepts}
To annotate the concepts discovered by SAEs, we use a two-step process:
1. Identify concept-related features.
   For a target concept (e.g., nudity in the unlearning task), we collect a set of prompts related to that concept, generate the corresponding images, and extract the top-K activated feature indices that are consistently triggered across these samples. These shared indices are treated as related to the target concept.
2. Explore causal relationships.  
   After identifying a node (feature) with first step, we use the inferred causal graph to find its parent and child nodes—features closely related to that concept. We then visualize the node’s attribute map (e.g., distinct regions of the cat in Fig.\ref{fig:topic_figure}) to interpret and confirm the concept’s semantics.

\paragraph{Computing resources.}
We use one L40 GPU for training the SAEs and a standard MacBook Pro with an M1 chip for causal discovery. Training one SAE takes around 8 hours.

\paragraph{Vision experiments.}
For the diffusion sampling process, we utilize the \texttt{sde-dpmsolver++}~\citep{lu2022dpm} sampler, which adds stochasticity between successive steps.
We train the K-sparse autoencoder using a latent dimension of 5120, a batch size of 4096, and the Adam optimizer with a learning rate of 0.0001, setting $K=10$.
We use prompts from the Laion-COCO dataset~\citep{schuhmann2022laion}.

Our causal discovery procedure consists of the following steps:

\begin{enumerate}
    \item \textbf{Identify key features for each SAE:} \\
    For every SAE trained at a specific noise level, we first extract the \textit{top-K} feature indices that show the highest average activation on the 10K LAION-COCO subset dataset.

    \item \textbf{Construct binary feature representations:} \\
    We then enumerate all unique feature indices across samples. For each sample, we create a binary feature vector where a value of 1 indicates that the corresponding feature index appears in the sample’s \textit{top-K} list, and 0 otherwise. This results in a \textit{feature–index matrix} representing the activation pattern of features for each SAE at each noise level.

   \item \textbf{Apply causal discovery:} \\
    Using the constructed matrices, we employ the classical causal discovery PC algorithm to infer the causal structure among the feature indices across noise levels. \textit{[PC]} identifies potential directional dependencies, revealing how certain features may causally influence others.
\end{enumerate}

 For the sparsity ablation study, we control the top-K value used in the SAE. Specifically, we train additional SAEs with K=4 and K=100 at timestep 500. To evaluate the effect of sparsity (Figure~\ref{fig:ablation}), we then perform causal discovery by replacing the SAE features with K=10 with those from the K=4 or K=100 models.
Table~\ref{tab:unlearning} evaluates the following baselines:
SD1.4~\citep{rombach2021highresolution}, ESD~\citep{gandikota2023erasing}, SA~\citep{heng2023selective}, CA~\citep{kumari2023ablating}, MACE~\citep{lu2024mace}, UCE~\citep{gandikota2024unified}, RECE~\citep{gong2024reliable}, SDID~\citep{li2024self}, SLD-MAX~\citep{schramowski2023safe}, SLD-STRONG~\citep{schramowski2023safe}, SLD-MEDIUM~\citep{schramowski2023safe}, SD1.4-NegPrompt~\citep{rombach2021highresolution}, SAFREE~\citep{yoon2024safree}, TRASCE~\citep{jain2024trasce}, and ConceptSteer~\citep{kim2025concept}.

\section{Additional Empirical Results} \label{app:empirical_results}

\begin{figure}
    \centering
    \includegraphics[width=1\linewidth]{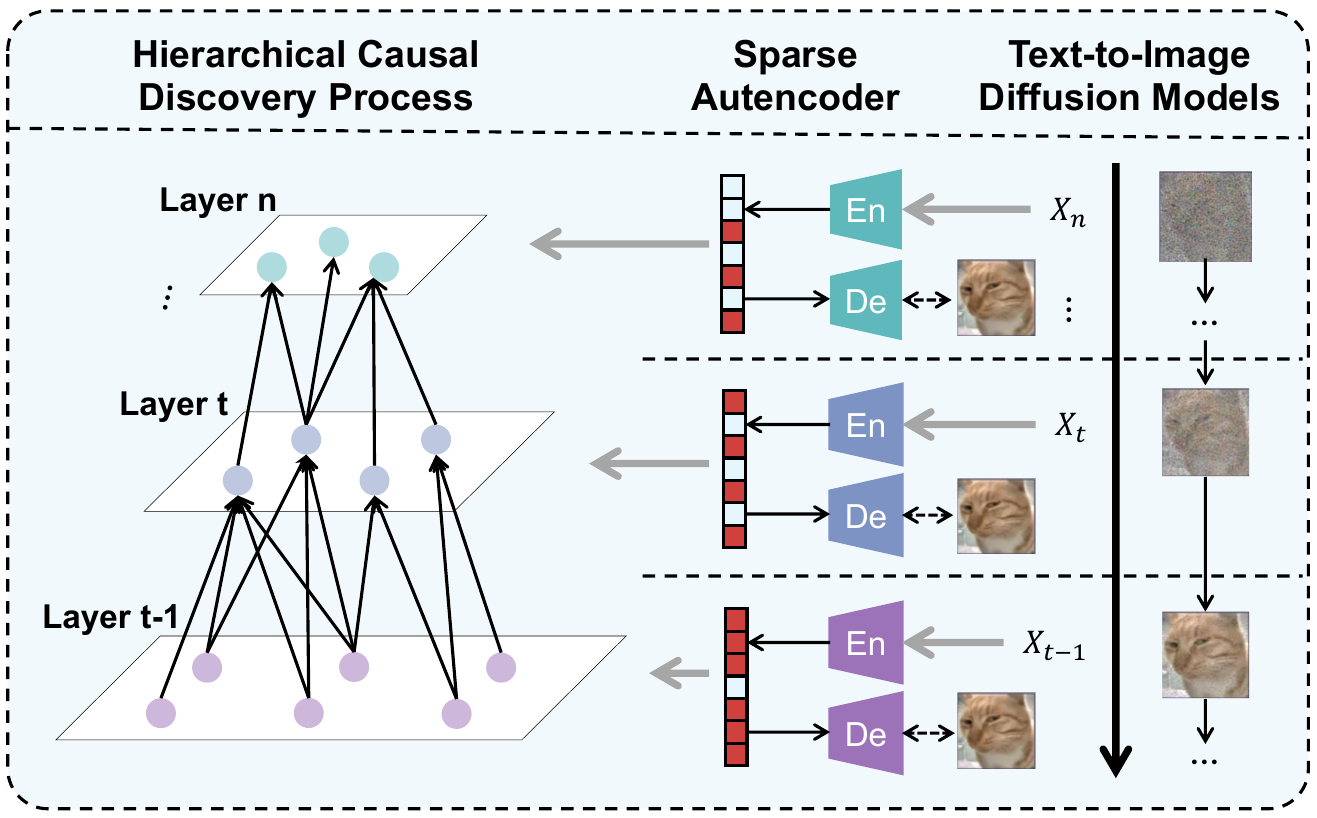}
    \caption{Diagram of our interpretability method. We train SAEs to capture features at different levels and apply causal discovery to construct a hierarchical concept graph.}
    \label{fig:method_diagram}
\end{figure}

\begin{figure} 
    \centering
    \setlength{\tabcolsep}{2pt} 
    \begin{tabular}{cc}
        Input & SD1.4 \\
        \includegraphics[scale=0.1]{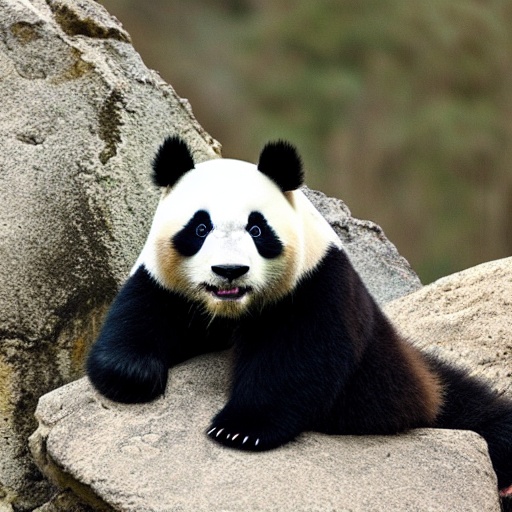} &
        \includegraphics[scale=0.1]{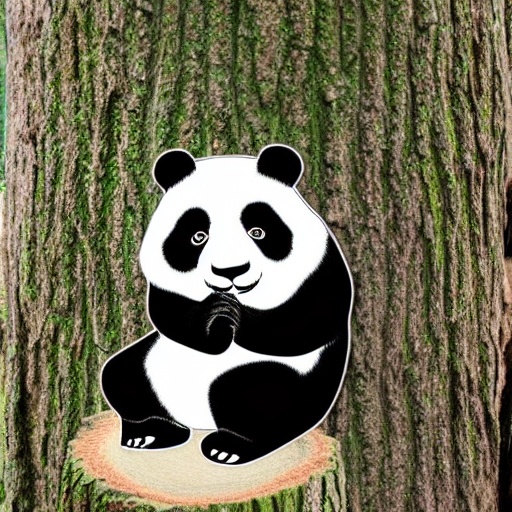} \\
       w/o hier & Ours \\
        \includegraphics[scale=0.1]{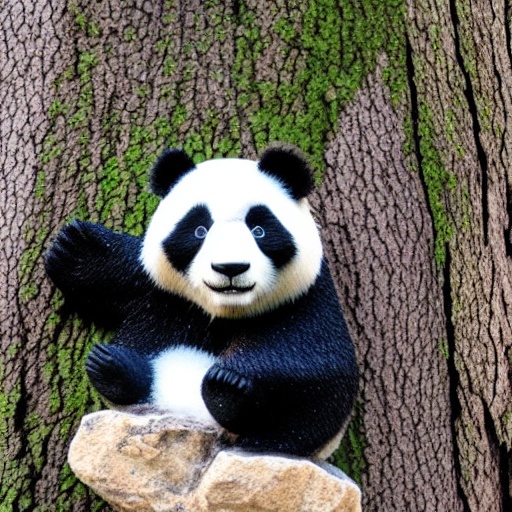} &
        \includegraphics[scale=0.1]{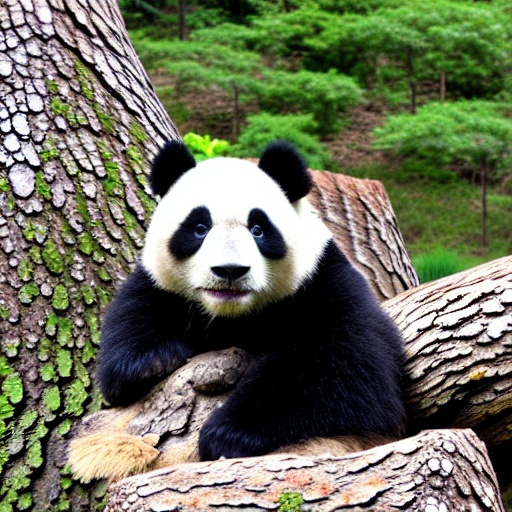} \\
    \end{tabular}
    \vspace{-1em}
    \caption{
        \small
        \textbf{Examples of controllable image generation.}
    }
    \vspace{-1em}
    \label{fig:panda_hier}
\end{figure}

\paragraph{Extension to Flux.1}
Our main experiments are conducted on Stable Diffusion V1.4, which adopts a U-Net architecture. To further validate the generality of our approach, we extend it to Flux.1-Schnell, a 12B text-to-image DiT model. Specifically, we extract features at timesteps 0, 1, 2, and 3 (the model performs inference in only four steps, as it is a distilled model) and train SAEs with the following settings: batch size 4096, learning rate 0.0001, latent dimension 12,288, and top-k = 20. Each SAE is trained on the LAION-COCO dataset for 20,000 steps. We use the last double-stream transformer block (out of 18 double-stream and 38 single-stream blocks) as the feature space (3072 dimensions).
We then perform causal discovery to identify causal dependencies among features. For evaluation, following the setup used for SD1.4, we test our method on the unlearning benchmark datasets (Table \ref{tab:unlearning_appendix}). In particular, we apply negative feature steering at feature index 4390 on timesteps 1, 2, and 3. Our method achieves significantly lower attack success rates on malicious nudity prompts across all benchmarks, demonstrating its robustness and effectiveness on DiT architectures.

\begin{table}[t!]
\centering
\setlength{\tabcolsep}{3.6pt}
\begin{tabular}{lcccccccccc}
\hline
\textbf{Method} & \textbf{I2P $\downarrow$} & \multicolumn{4}{c}{\textbf{RING-A-BELL $\downarrow$}} & \textbf{P4D $\downarrow$} & \textbf{UATK $\downarrow$} & \multicolumn{2}{c}{\textbf{COCO}} \\
\cline{3-6} \cline{9-10}
 &  & K77 & K38 & K16 & AVG &  &  & FID $\downarrow$ & CLIP $\uparrow$ \\
\hline
{\textbf{Flux}}    &  3.08   &50.53&51.58&52.63&51.58&27.15&19.72 & 22.89& 31.57 \\
{\textbf{Ours-Flux}}    &0.94&11.58& 5.26 &4.21& 7.01&3.31 & 4.93 &24.40& 31.54\\ \hline
\end{tabular}
\caption{\textbf{Model unlearning performance on Flux.1-Schnell.} The Flux text-to-image model is susceptible to malicious prompts in benchmark datasets, often producing images containing nudity. By training SAEs on Flux features at different timesteps, we identify the latent representation of the nudity concept and apply negative feature steering to suppress it. This effectively reduces nudity generation while maintaining competitive text-to-image performance on normal prompts from the COCO dataset. }
\label{tab:unlearning_appendix}
\end{table}

\begin{figure}[h]
    \centering
    \setlength{\tabcolsep}{4pt} 
    \begin{tabular}{c@{\hskip 1pt}c@{\hskip 15pt}c@{\hskip 1pt}c@{\hskip 15pt}c@{\hskip 1pt}c}
    \multicolumn{6}{c}{Add fire, + Feature 9678} \\ \hline
        \includegraphics[scale=0.11]{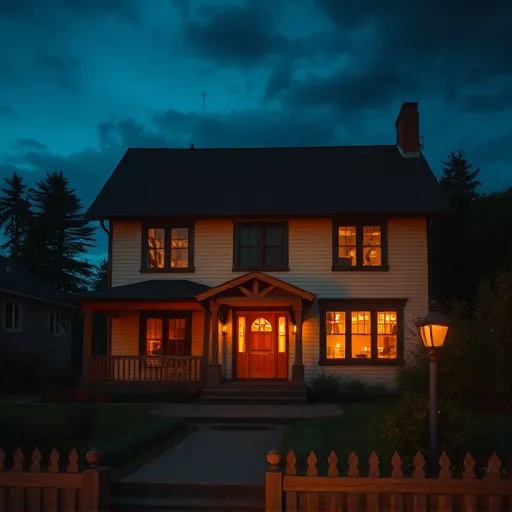} &
        \includegraphics[scale=0.11]{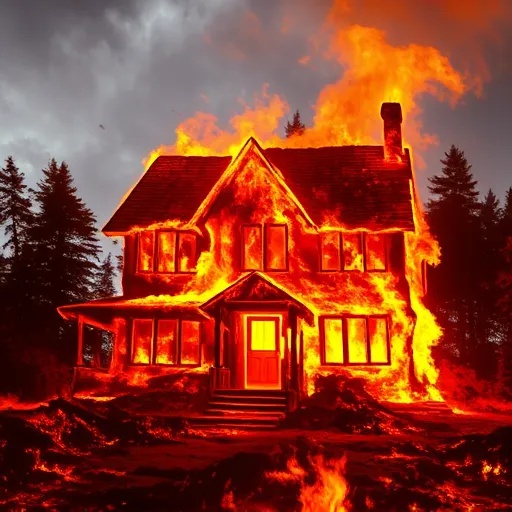} &
        \includegraphics[scale=0.11]{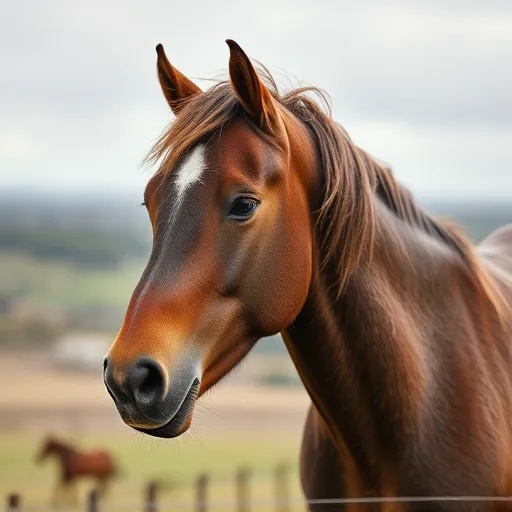} &
        \includegraphics[scale=0.11]{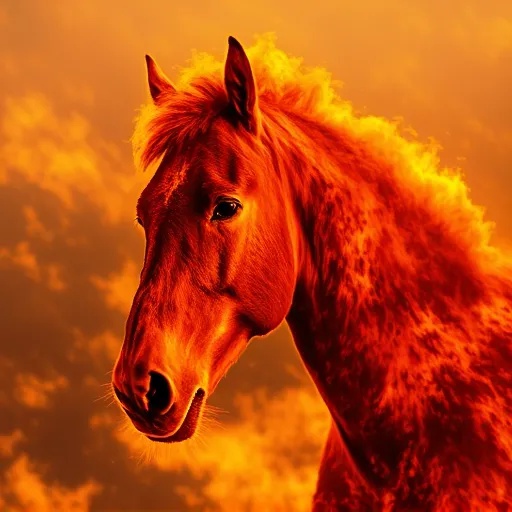} &
        \includegraphics[scale=0.11]{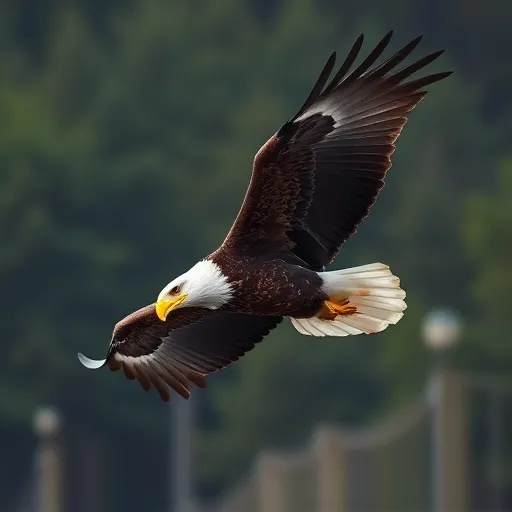} &
        \includegraphics[scale=0.11]{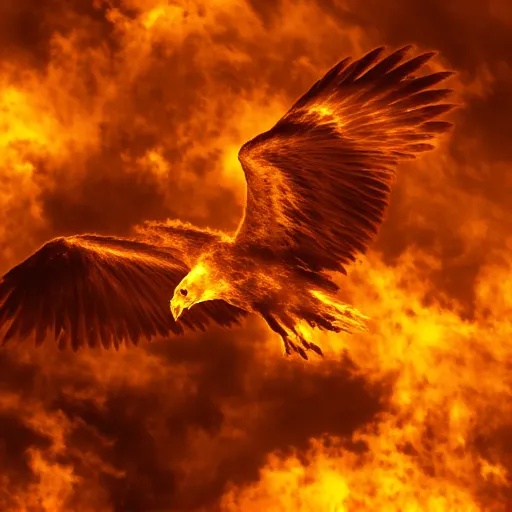} \\
         \multicolumn{6}{c}{Add Mountain, + Feature 4656} \\ \hline
        \includegraphics[scale=0.11]{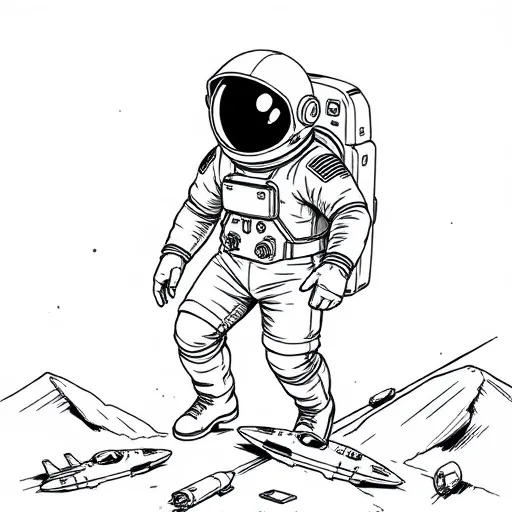} &
        \includegraphics[scale=0.11]{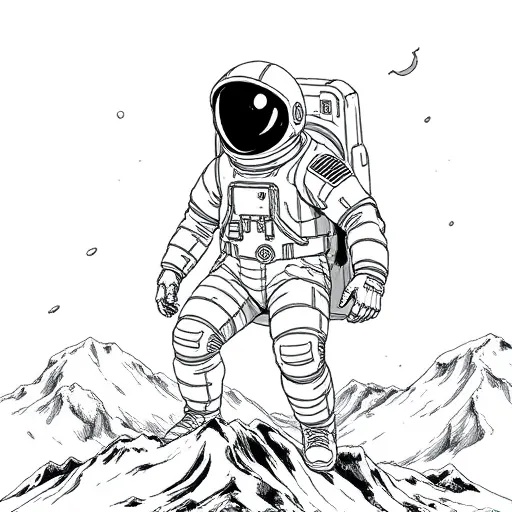} &
        \includegraphics[scale=0.11]{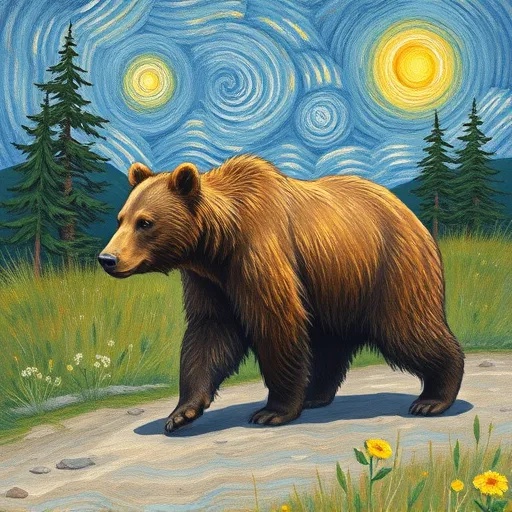} &
        \includegraphics[scale=0.11]{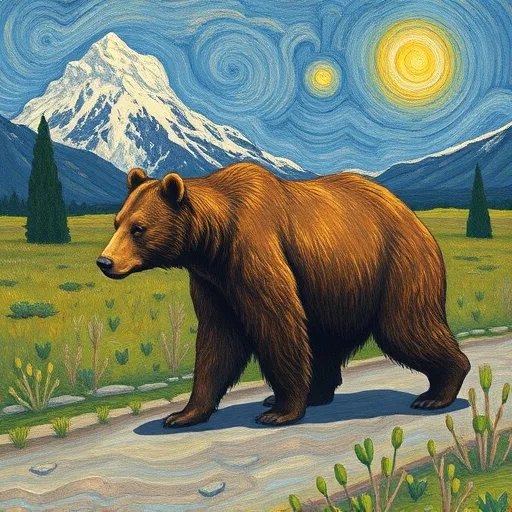} &
        \includegraphics[scale=0.11]{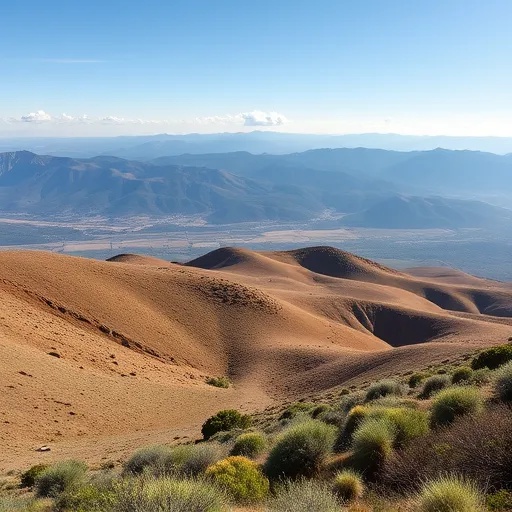} &
        \includegraphics[scale=0.11]{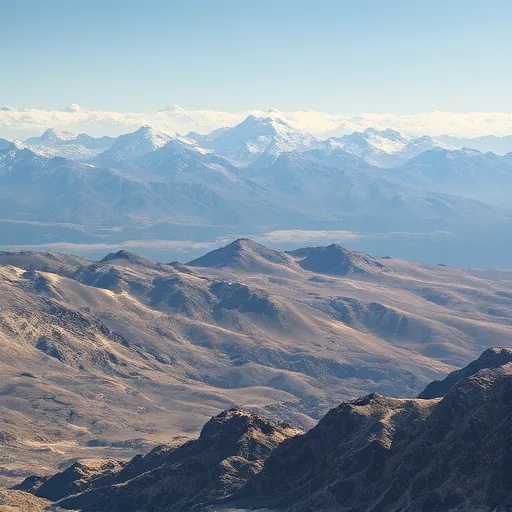} \\
        
    \end{tabular}
    \caption{\textbf{Context-sensitive and objective-agnostic concepts.} Steering the same concept (``fire'' at the top row and ``mountain'' at the bottom row) yields visual changes in the original context, demonstrating that our learned concepts are \emph{context-sensitive} and \emph{shared across objects/classes}.}
    \label{fig:shared_concepts}
\end{figure}

\paragraph{More examples for Figure~\ref{fig:vision_causal}.}
Figure~\ref{fig:vision_causal_more} and Figure~\ref{fig:vision_causal2} contain more examples of Figure~\ref{fig:vision_causal}. For example, node 3641 in the SAE at timestep 899 contains comprehensive information about the panda, as illustrated by the heatmap. When feature steering is applied, it results in the generation of a new panda. Meanwhile, nodes 1026 and 511 in the SAE at timestep 500 represent different components of the panda. At a finer level of detail, nodes 3489, 3880, and 451 in the SAE at timestep 100 capture specific image features. These hierarchical concept graphs effectively illustrate how the panda is generated.

\begin{figure}
    \centering
    \includegraphics[width=1\linewidth]{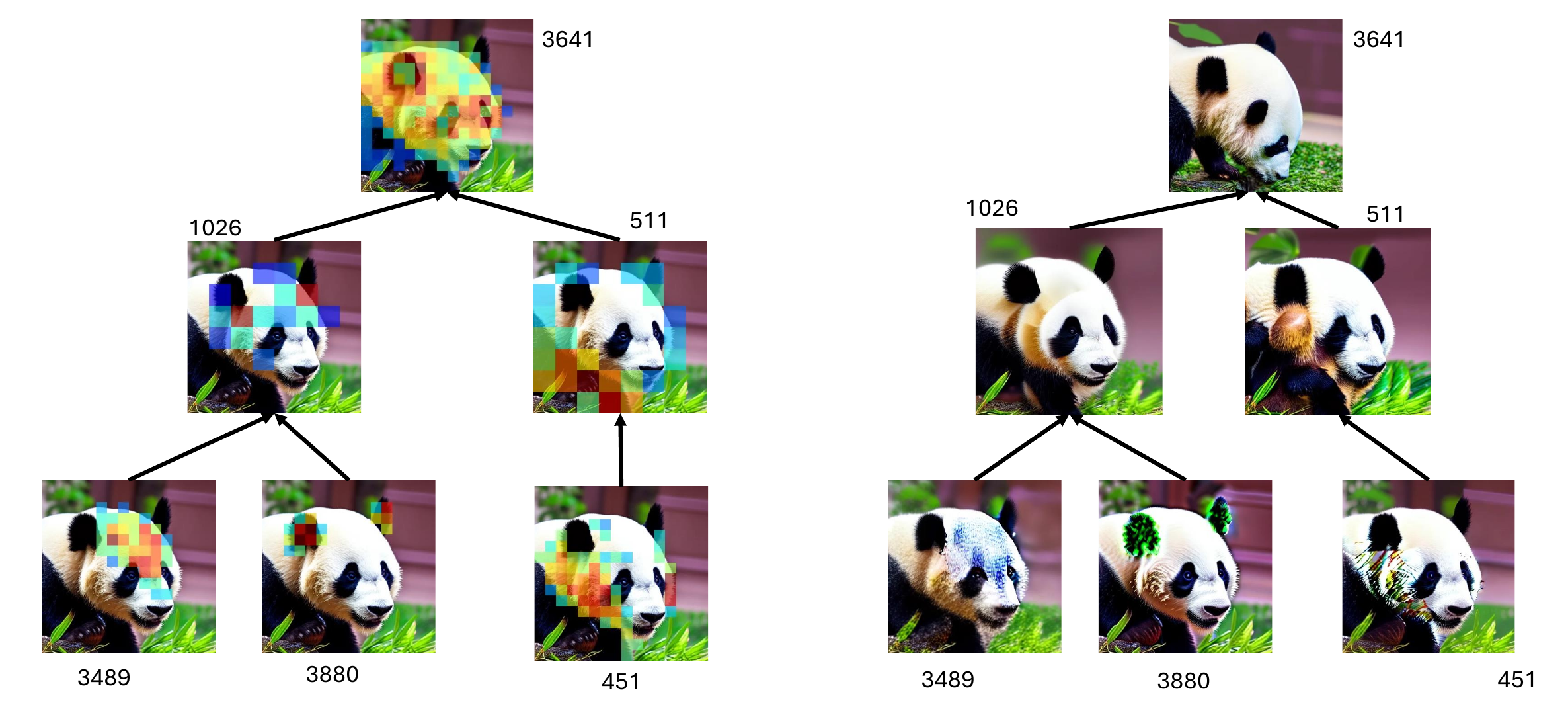}
     \includegraphics[width=1\linewidth]{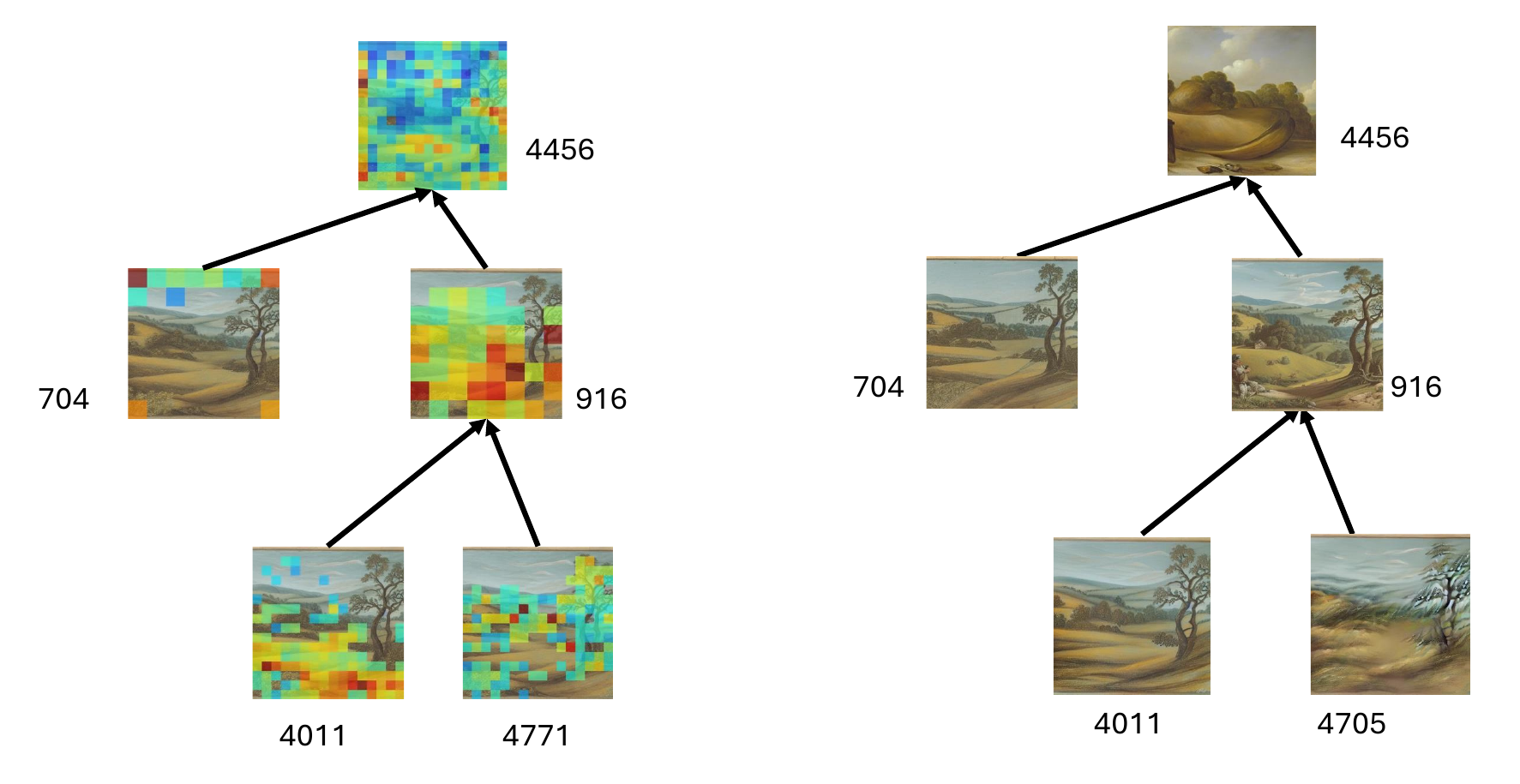}
    \caption{
        \textbf{Discovered hierarchical concept graphs and feature steering visualization for text-to-image generation.}
        We can observe that features on the hierarchical model represent a part-whole relation, and steering a feature yields corresponding visual variation (e.g., the panda's ears).
    }
    \label{fig:vision_causal_more}
\end{figure}

\begin{figure}[t]
\centering
\includegraphics[width=0.9\textwidth]{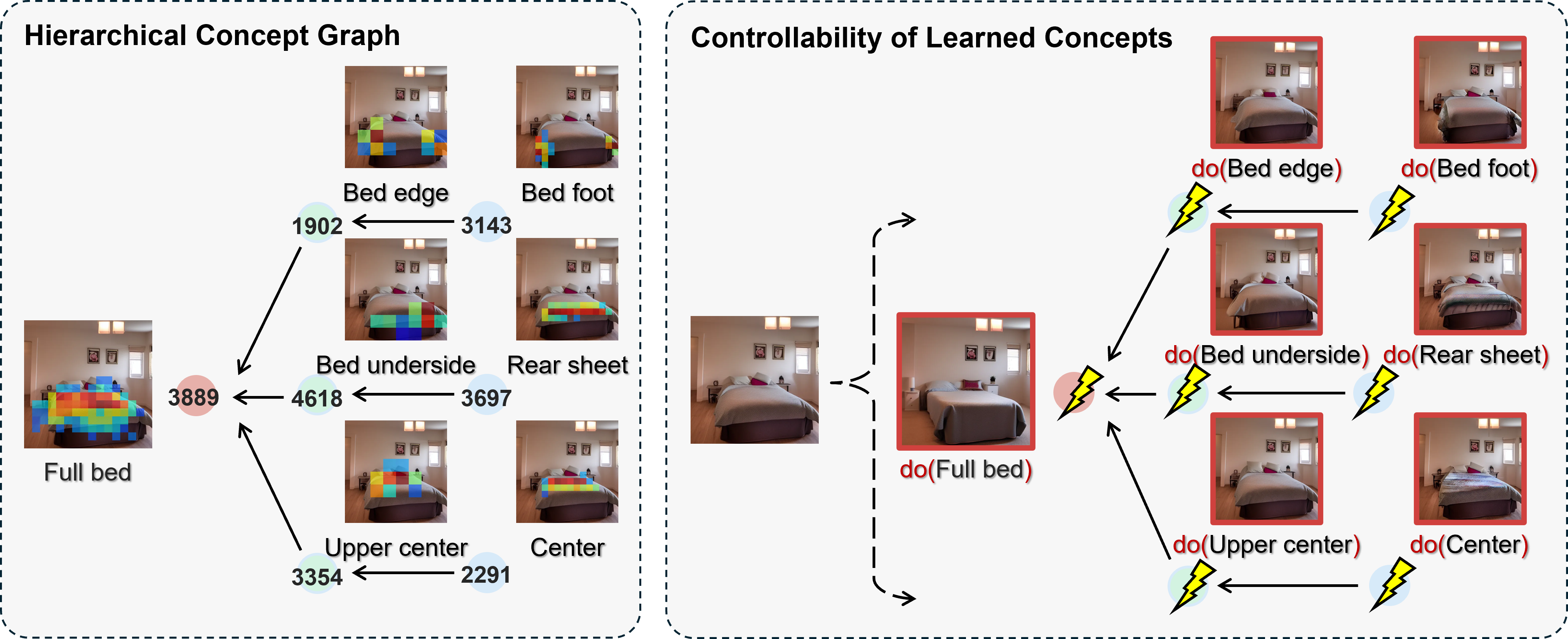}
\caption{
    \small
    \textbf{More examples of the learned hierarchical concept graphs for text-to-image models}. Under appropriate sparsity and noise conditions, our method successfully recovers meaningful hierarchical structures, where each node encodes distinct semantic concepts. On the right, we demonstrate feature steering, where manipulating individual nodes leads to changes in the output that align with their position in the hierarchy -- higher-level nodes produce broader semantic shifts, while lower-level nodes control more fine-grained aspects.
}
\label{fig:vision_causal2}
\end{figure}

\begin{table}
\centering
\begin{tabular}{lcccc}
\toprule
Method & LPIPSe $\uparrow$ & LPIPSu $\downarrow$ & Acce $\downarrow$ & Accu $\uparrow$ \\ 
\midrule
\multicolumn{5}{l}{\textbf{Task: Remove ``Van Gogh''}} \\ \hline
SD-v1.4            & --   & --   & 0.95 & 0.95 \\
CA~\citep{kumari2023ablating}            & 0.30 & 0.13 & 0.65 & 0.90 \\
RECE~\citep{gong2024reliable}         & 0.31 & 0.08 & 0.80 & 0.93 \\
UCE~\citep{gandikota2024unified}          & 0.25 & 0.05 & 0.95 & 0.98 \\
SLD-Medium~\citep{schramowski2023safe}   & 0.21 & 0.10 & 0.95 & 0.91 \\
SAFREE~\citep{yoon2024safree}       & 0.42 & 0.31 & 0.35 & 0.85 \\
Ours               & \textbf{0.53} & 0.26 & \textbf{0.30} & 0.88 \\
\midrule
\multicolumn{5}{l}{\textbf{Task: Remove ``Kelly McKernan''}} \\ \hline
SD-v1.4            & --   & --   & 0.80 & 0.83 \\
CA~\citep{kumari2023ablating}           & 0.22 & 0.17 & 0.50 & 0.76 \\
RECE~\citep{gong2024reliable}         & 0.29 & 0.04 & 0.55 & 0.76 \\
UCE~\citep{gandikota2024unified}          & 0.25 & 0.03 & 0.80 & 0.81 \\
SLD-Medium~\citep{schramowski2023safe}   & 0.22 & 0.18 & 0.50 & 0.79 \\
SAFREE~\citep{yoon2024safree}      & 0.40 & 0.39 & 0.40 & 0.78 \\
Ours               & \textbf{0.48} & 0.20 & \textbf{0.35} & 0.81 \\
\bottomrule \\
\end{tabular}
\caption{\textbf{Results on style removal.} We apply negative feature steering to the node to suppress the styles in the image.
}
\label{tab:style_transfer}
\end{table}

\paragraph{More results for model unlearning}
In addition to the four benchmark datasets in the main paper, 
we report results on another commonly used benchmark dataset with two tasks: \emph{Remove Van Gogh} and \emph{Remove Kelly McKernan} in Table.\ref{tab:style_transfer}. 
We evaluate performance using four metrics: 
LPIPSe (similarity for prompts with the target style), 
LPIPSu (similarity for prompts without the style), 
Acce (how well the target style was removed), 
and Accu (how well other styles were preserved), 
with accuracy ratings assessed using GPT-4o. 
Our method achieves competitive performance across all metrics and tasks.

\paragraph{Understanding the sparsity constraint.}
Figure~\ref{fig:sparsity_ablation} and Table~\ref{tab:sparsity_ablation} contain the ablation study for the sparsity constraint.
We can observe that a proper sparsity strength can indeed give rise to desirable interpretability results, while too small and too large sparsity constraints may be harmful in practice. As shown in Table~\ref{tab:sparsity_ablation}, a low sparsity penalty results in visualized maps with significant overlap. On the other hand, applying a strong sparsity penalty leads to low node coverage, indicating that the nodes alone are insufficient to fully explain the generation of the entire image.

\begin{table}[ht]
    \centering
    \begin{tabular}{c|cc}
    \hline
    & Overlap $\downarrow$ & Coverage $\uparrow$ \\ \hline 
         K=4& 0.108 $\pm$ 0.128 & 26.37$\pm$ 17.24 \\ \hline
         K=10&  0.089 $\pm$ 0.079 & 47.90 $\pm$ 12.50\\ \hline
         K=100 & 0.235 $\pm$ 0.132 & 37.46 $\pm$ 17.31\\ \hline
    \end{tabular}
    \caption{
    \textbf{
        Quantitative ablation results.} 
        We generate 100 panda images using different random seeds and visualize the feature heatmaps at timestep 500. We adjust the top-K value in the SAE at timestep 500 to control the level of sparsity. To evaluate, we compute the intersection-over-union (IoU) of intermediate heatmaps to measure concept disentanglement, and the union of all features to assess coverage. IoU reflects how distinctly the intermediate concepts are represented, while coverage in percentage indicates the extent to which the intermediate nodes collectively account for the image generation. }
    \label{tab:sparsity_ablation}
\end{table}

\begin{figure}
    \centering
    \includegraphics[width=1\linewidth]{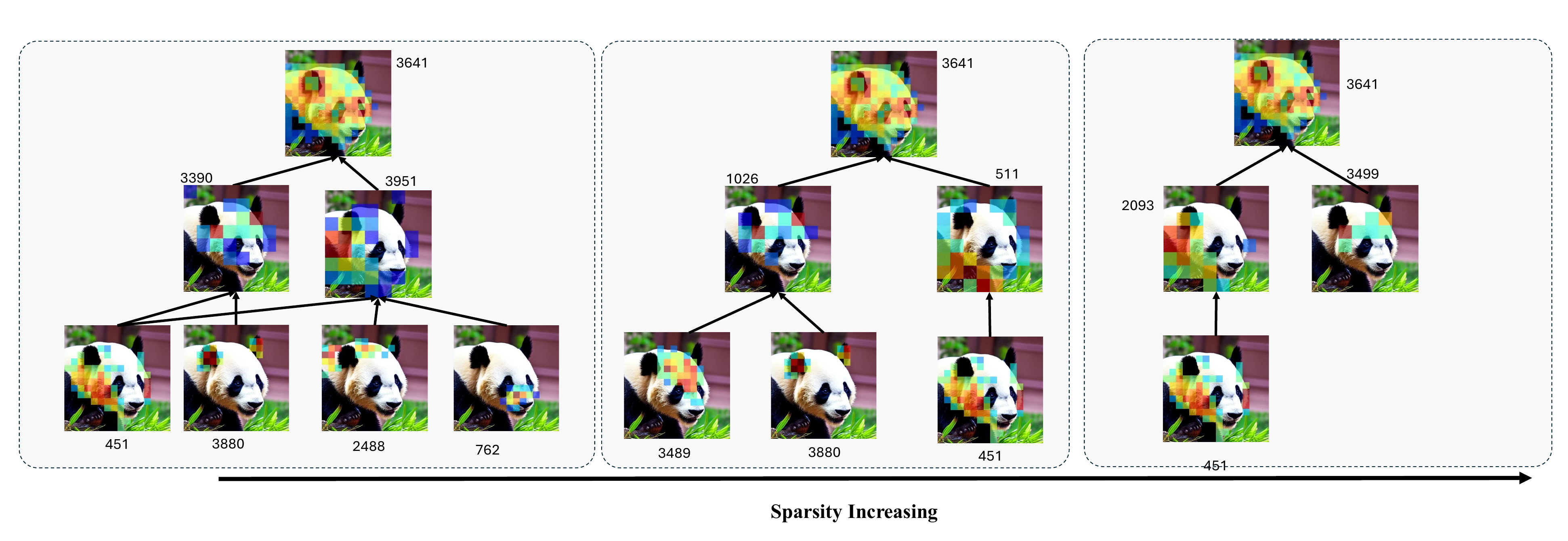}
    \caption{\textbf{Understanding the sparsity constraint.} We adjust the top-K value in the SAE at timestep 500 to control the level of sparsity, effectively modifying the sparsity strength of the SAE at this middle layer. As sparsity decreases, the resulting graph becomes denser, introducing many redundant and semantically irrelevant edges. This reduces the overall interpretability of the concept graph. Conversely, increasing sparsity yields a cleaner, more concise graph. However, if sparsity is too high, it may hinder the formation of a complete and interpretable concept graph necessary for image generation.}
    \label{fig:sparsity_ablation}
\end{figure}

\end{document}